\RequirePackage[svgnames]{xcolor}

\documentclass[11pt,letterpaper]{mystyle}

\usepackage[all]{hypcap}
\usepackage[svgnames]{xcolor}
\usepackage[comma,authoryear,compress]{natbib}
\usepackage{hyperref}[citecolor=lightblue]

\hypersetup{
    colorlinks = true,
    citecolor = {YaleBlue},
}

\usepackage{algorithm}
\usepackage{algorithmicx}
\usepackage{algpseudocode}
\usepackage{microtype}
\usepackage{graphicx}
\usepackage{booktabs}%
\usepackage{colortbl}%
\usepackage{arydshln}%
\usepackage{float}
\usepackage{bigstrut}

\usepackage{amsmath}
\usepackage{amssymb}  
\usepackage{bm}
\usepackage{mathtools}
\usepackage{amsthm}
\usepackage{nicefrac}
\usepackage{enumitem}
\usepackage{subcaption}
\usepackage{graphicx}
\usepackage{cleveref}
\usepackage{bxcoloremoji}
\usepackage{datetime2}
\usepackage{float}

\usepackage[utf8]{inputenc}%
\usepackage[T1]{fontenc}%
\usepackage{hyperref}%
\usepackage{url}%
\usepackage{booktabs}%
\usepackage{amsfonts}%
\usepackage{nicefrac}%
\usepackage{microtype}%
\usepackage{graphicx}
\usepackage{subcaption}%
\usepackage{wrapfig}
\usepackage{lipsum}
\usepackage{enumitem}
\usepackage{stackengine}
\usepackage[font=small,labelfont=bf]{caption}
\usepackage{color}
\usepackage{adjustbox}
\usepackage{multirow, multicol}

\usepackage{rotating}
\usepackage{makecell}

\newcommand{\x}{\mathbf{x}}

\newcommand{\cv}{\mathbf{c}}
\newcommand{\xv}{\mathbf{x}}
\newcommand{\yv}{\mathbf{y}}

\DeclareMathOperator*{\E}{\mathbb{E}}

\newcommand{\kl}{D_\text{KL}}

\def\ss#1{\scriptsize #1}

\definecolor{blanchedalmond}{rgb}{1.0, 0.92, 0.8}
\definecolor{carmine}{rgb}{0.59, 0.0, 0.09}
\definecolor{lightblue}{rgb}{0.22,0.45,0.70}%

\theoremstyle{plain}
\newtheorem{theorem}{Theorem}[section]

\newtheorem{corollary}[theorem]{Corollary}
\newtheorem{proposition}[theorem]{Proposition}

\theoremstyle{definition}

\newtheorem{assumption}{Assumption}[section]

\theoremstyle{remark}
\newtheorem{remark}{Remark}[section]

\renewcommand{\mathbf}{\boldsymbol}

\makeatletter
\def\Ddots{\mathinner{\mkern1mu\raise\p@
\vbox{\kern7\p@\hbox{.}}\mkern2mu
\raise4\p@\hbox{.}\mkern2mu\raise7\p@\hbox{.}\mkern1mu}}
\makeatother

\definecolor{amaranth}{rgb}{0.9, 0.17, 0.31}
\definecolor{antiquebrass}{rgb}{0.8, 0.58, 0.46}
\definecolor{antiquefuchsia}{rgb}{0.57, 0.36, 0.51}
\definecolor{chromeyellow}{rgb}{0.31, 0.47, 0.26}

\newcommand{\J}{\mathcal{J}}

\usepackage{pifont}%

\definecolor{hpdgreen}{rgb}{0.13,0.50,0.20}
\definecolor{basered}{rgb}{0.70,0.10,0.12}
\definecolor{caseframe}{rgb}{0.20,0.25,0.32}

\newtcolorbox{casebox}[1]{
  colback=white, colframe=caseframe, boxrule=0.7pt, arc=2pt,
  left=5pt, right=5pt, top=3pt, bottom=3pt,
  fontupper=\footnotesize,
  fonttitle=\bfseries\footnotesize, coltitle=white, colbacktitle=caseframe,
  title={#1}
}

\newcommand{\cmark}{{\color{hpdgreen}\ding{51}}}
\newcommand{\xmark}{{\color{basered}\ding{55}}}
\newcommand{\Hpd}[1]{\textcolor{hpdgreen}{#1}}
\newcommand{\Base}[1]{\textcolor{basered}{#1}}
\newcommand{\method}[4]{\par\smallskip\noindent#1{\textbf{#2}}~#3\quad\footnotesize #4}
\newcommand{\caseqa}[1]{\par\noindent #1}
\newcommand{\caseline}{\par\smallskip\noindent{\color{caseframe!45}\rule{\linewidth}{0.4pt}}\par\smallskip}
\newcommand{\casepassage}[2][Edited passage (injected as training text)]{%
  \par\noindent\colorbox{gray!12}{\parbox{\dimexpr\linewidth-2\fboxsep\relax}{%
  \footnotesize\textbf{#1:}\par #2}}}

\usepackage{xspace}
\newcommand{\name}{{HPSE}\xspace}

\definecolor{lightblue}{rgb}{0.22,0.45,0.70}%
\definecolor{Gray}{gray}{0.95}
\definecolor{Cornsilk}{rgb}{1.0, 0.97, 0.86}

\usepackage{amsmath}
\usepackage[all]{hypcap}

\title{
\LARGE
Hybrid-Policy Self-Editing for Composable Unstructured Knowledge Editing
}

\runningtitle{
Hybrid-Policy Self-Editing for Composable Unstructured Knowledge Editing
}

\author{
Tianci Liu\textsuperscript{1,3} \quad
Zihan Dong\textsuperscript{2} \quad
Tianchun Li\textsuperscript{3} \quad
Yi-Chung Chen\textsuperscript{3} \quad
Qiming Cao\textsuperscript{3} \quad
Xingchen Wang\textsuperscript{3} \quad
Shiyang Wang\textsuperscript{3} \quad
Zichen Miao\textsuperscript{3} \quad
Linjun Zhang\textsuperscript{2} \quad
Haoyu Wang\textsuperscript{4} \quad
Jing Gao\textsuperscript{3} \\
\textsuperscript{1}University of Tennessee \quad
\textsuperscript{2}Rutgers University \quad
\textsuperscript{3}Purdue University \quad
\textsuperscript{4}University at Albany
}

\begin{document}

\begin{abstract}

Large language models (LLMs) achieve remarkable performance across natural language tasks, yet they are trained on static corpora and their knowledge quickly becomes outdated in a fast-changing world. This motivates knowledge editing (KE), which updates specific knowledge in an LLM without changing unrelated others. Recent works move from structured knowledge triples toward unstructured KE (UKE), where the edit is a free-form passage that may state multiple facts at once. Nonetheless, existing editors inject such a passage yet fail to \emph{use} it: the edited model can recall the passage, but can neither answer atomic questions about its facts nor compose them into multi-hop reasoning. We attribute this missing property, which we term \emph{composability}, to editors' passive reliance on the fixed passage as the sole learning source. In response, we cast editing as a proactive self-distillation from a privileged in-context state of the same model,
which requires no external supervision.
We further reveal that
due to the novelty of the injected knowledge,
the pre-edited model's own rollouts rarely cover it, which limits the effectiveness of pure on-policy distillation.
To close this gap, we propose \name, which builds a hybrid rollout that steps in to place missing facts onto the student's own trajectory precisely where its coverage fails, while staying on-policy elsewhere.
We theoretically analyze \name's advantage over pure on-policy distillation, and empirically establish its plug-and-play improvements across four LLM backbones and two KE editors under various scenarios.

\vspace{2mm}

\textit{Keywords: Knowledge Editing, Large Language Models}

\vspace{5mm}

\coloremojicode{1F4C5} 
\textbf{Date}: \today

\coloremojicode{1F4DA} \textbf{Code \& Datasets}: \href{https://github.com/lliutianc/hpse}{https://github.com/lliutianc/hpse}

\coloremojicode{1F4E7} 
\textbf{Contact}: 
\href{mailto:tliu43@tennessee.edu}{tliu43@tennessee.edu}

\end{abstract}

\maketitle
\vspace{3mm}

\section{Introduction}
\label{sec:intro}

Large language models (LLMs)~\citep{vaswani2017attention,brown2020language} demonstrate strong generalization across a wide range of language tasks~\citep{raffel2020exploring,ji2023survey}, establishing themselves as a new foundation for modern NLP~\citep{bommasani2021opportunities,zhou2023comprehensive}.
As model sizes grow, LLMs further exhibit emergent abilities to follow natural language instructions~\citep{dong2022survey,ouyang2022training}, enabling zero-shot adaptation to unseen tasks~\citep{kojima2022large,bubeck2023sparks}.

However, real-world LLM deployment remains far from resolved unless LLMs can continually refresh and internalize new knowledge beyond their training data. In a fast-changing world, an LLM's \emph{static} knowledge quickly goes out of date, leading to factual errors or even unsafe generations~\citep{de2021editing,hartvigsen2024aging}.
Yet retraining the model to absorb each update is prohibitively expensive.
As a remedy, \emph{Knowledge Editing} (KE) has been proposed to address this by updating an LLM with a \emph{specific} piece of knowledge~\citep{wang2023knowledge,zhang2024comprehensive}.

Given a direct, well-structured statement of new knowledge, e.g., a knowledge triple naming the new owner of a recently acquired company,
KE typically seeks \emph{selective} parameter updates that keep unrelated knowledge and general capabilities intact.
To name a few, \citet{meng2023memit,fang2025alphaedit} apply closed-form updates to a few causally traced MLP layers that serve as knowledge storage~\citep{dai2021knowledge},
\citet{wang2024roselora} directly learn sparse low-rank updates. \citet{liu2025unlocking} steer selected representations with non-linear updates.
Recent works move towards the more realistic setting of \emph{unstructured} KE (UKE), where the new knowledge spans multiple aspects in free-form text.
For example, an actual acquisition announcement is more likely to be a free-form passage that states multiple entangled facts at once, such as the new parent company, the incoming CEO, and the reshaped reporting structure, with no specification of which facts a user may later query.
UKE pursues this more challenging objective: treating the announcement passage itself as the knowledge to edit~\citep{wu2024akew}.
To this end, \citet{deng2024unke} extend the edit from a few MLP layers to entire transformer blocks for larger capacity. \citet{jiang2025anyedit} decompose the passage into sequential chunks for iterative editing; and \citet{su2025muke} preserve the dependencies among chunk-wise updates.
These fine-grained designs have collectively demonstrated success in making KE \emph{precise}.

Nonetheless, the edited knowledge turns out largely \emph{useless} compared to knowledge acquired through pretraining in two aspects.
The first is the loss of \emph{decomposition}: after internalizing a long-form edit, the model can at best recall the whole passage yet cannot answer targeted questions about the individual facts it encodes~\citep{zhou2026context,wang2026fable}.
The second is the loss of \emph{composition}: having learned individual facts, the model cannot compose them for multi-hop reasoning~\citep{zhong2023mquake,zhong2025mquakerm}.
Both phenomena have been analyzed in the literature, but mostly in isolation under separate scenarios~\citep{zhang2024uncovering,yao2025cake},
despite that they co-occur and are amplified in UKE.
We use \emph{composability} to denote whether edited knowledge, like its pretrained counterpart, can function as atomic facts that are flexibly extracted and recombined, and study decomposition and composition from this unified perspective.

In particular, we introduce an \emph{untargeted} regime for UKE: a free-form knowledge statement is paired with a generic editing prompt that does not specify which facts are updated, mirroring practical requests where annotators offer only a brief summary of the new passage.
We recast two representative benchmarks into a decomposition probe~\citep{deng2024unke} and a composition probe~\citep{zhong2025mquakerm}, respectively.
We find that the evaluated methods, despite their different mechanisms, consistently fall short of composability under our regime.
We attribute this failure to their \emph{passive} reliance on the provided editing context as the sole learning source, which induces severe memorization and poor generalization~\citep{zhang2024uncovering,liu2025heto,chu2025sft}; existing augmentation-based remedies~\citep{zhou2026context,yao2025cake,wang2026fable} alleviate but do not eliminate this reliance.
In summary, composability remains an open challenge, 
necessitating a generalizable UKE method; Section~\ref{sec:problem} details these findings.
\emph{Our untargeted regime and formulation of composability offer a unified view of what UKE requires in the wild. Together with our benchmarking protocol, these constitute the first key contribution of this work.}

Given the pivotal role of the editing context, we pursue a \emph{proactive} strategy through on-policy self-distillation (OPSD): the edited model generates its own response trajectories, while a \emph{privileged} state of the same model (one that reads the new passage in context) provides token-level distillation targets~\citep{lu2025opd,song2026opd,zhao2026opsd}.
Although OPSD has proved effective for adapting behavioral patterns in post-training~\citep{chu2025sft,lu2025opd,song2026opd}, it falls short in UKE: because the injected knowledge is novel, the model's own rollouts often go off-topic, yielding limited corrective supervision~\citep{gandhi2025cognitive,yue2026does}.
We term this a \emph{coverage failure}.

To mitigate this issue, we propose \name, which asks the privileged model not only to grade the edited model's own responses, but also to \emph{step in} when a rollout deviates too far from a useful trajectory.
This strategy forms hybrid, rather than purely on-policy, rollouts with better coverage, offering stronger supervision for UKE.
We further support \name with theoretical justification.
Notably, \name is a data-centric approach: it replaces only the training signal, making no assumption about which parameters are updated or how the update is parameterized, thereby offering plug-and-play flexibility across gradient-based KE editors.
\emph{The proposed \name and our theoretical analysis constitute the second key contribution of this work.}

The remainder of this paper is organized as follows. Sections~\ref{sec:problem} and~\ref{sec:method} detail composable UKE and the proposed \name, respectively.
Section~\ref{sec:experiments} evaluates \name across various settings.
Finally, Section~\ref{sec:related_work} reviews related work, and Section~\ref{sec:conclusion} concludes the paper.

\section{Composability Challenge in Unstructured Knowledge Editing}
\label{sec:problem}

This section presents composability, a critical requirement for practical unstructured knowledge editing (UKE) that has been largely overlooked in the literature. Background on LLMs is also provided.

\subsection{Preliminaries}
\label{sec:prelim}
Given a text
$\xv = (x_1, \dots, x_n)$, where each $x_i \in \mathcal V$ is a token from vocabulary $\mathcal V$,
a large language model (LLM) parameterized by $\theta$ computes probability $\pi_\theta(\xv)$ based on the chain rule~\citep{bengio2000neural}:
\begin{align}
    \pi_\theta(\xv)
    &= \prod_{i=1}^n \pi_\theta (x_i \mid {x_1, \dots, x_{i-1}})
    \triangleq \prod_{i=1}^n \pi_\theta (x_i \mid \xv_{<i}),
    \notag
\end{align}
where $\pi_\theta(x_i \mid \xv_{<i} )$ is the predicted distribution of token $x_i$ given the previous $\xv_{<i}$.
The LLM is usually trained with maximum likelihood estimation~\citep{hochreiter1997long,sutskever2014sequence,cho2014learning,yenduri2023generative}.
To generate a sentence $\xv$, the LLM computes $\pi_\theta(x_i \mid \xv_{<i})$ and draws $x_i$ from it; then $x_i$ is combined with $\xv_{<i}$ as new inputs for future steps.
This process terminates when a special token that marks the end of the sentence is generated, or when the maximum length is reached.

\textbf{Knowledge Editing (KE)} aims to update a pre-trained LLM's specific knowledge precisely while preserving other knowledge unrelated to the update, a requirement known as \emph{locality}~\citep{wang2023knowledge,zhang2024comprehensive}.
Structured KE expresses a piece of knowledge as a triple \emph{(subject, relation, object)} and requires updating \emph{object} to a new value.
Given a natural-language pair $(\xv, \yv)$ where the editing prompt $\xv$ describes a \emph{subject} and \emph{relation}, and $\yv$ specifies the corresponding \emph{object}, KE asks the LLM to respond to $\xv$ with the edited object $\yv$. 
Built upon KE, 
\textbf{Unstructured KE (UKE)} studies a more realistic regime, 
where a free-form passage $\cv$ encodes \emph{one} or \emph{multiple} pieces of new knowledge, with no labeled knowledge triple, making it more challenging \citep{wu2024akew,yang2025mirage,jiang2025anyedit}.

\subsection{Formulating and Benchmarking UKE Composability}
\label{subsection:composability}

\textbf{Composability Definition.}
KE is useful only if the knowledge it injects can be \emph{used} by the LLM, not merely \emph{memorized} in a way that can only be repeated under the exact query used for editing~\citep{zhong2023mquake,deng2024unke,zhang2024uncovering,zhong2025mquakerm,zhou2026context}.
Crucially, when it comes to UKE, where multiple pieces of knowledge are injected at once,
a \emph{useful} edit entails the following two aspects, which we collectively refer to as \emph{composability}.

\begin{itemize}
\item
\textbf{Decomposition:}
Multiple pieces of knowledge are often encoded in the passage $\cv$ and are injected jointly by editing the LLM with $\cv$ directly~\citep{deng2024unke}. In this case, an ideal LLM should be able to decompose these facts and absorb them individually. After editing, any query about an atomic fact in the passage should be answered with the fact directly, rather than by repeating the whole passage~\citep{zhou2026context,wang2026fable}.

\item
\textbf{Composition:}
Different pieces of new knowledge are usually related. Consequently, the edited model should be able to proactively combine them for multi-hop reasoning. This is closely related to the notion of \emph{portability}~\citep{zhong2023mquake,wang2024deepedit,zhong2025mquakerm}.
\end{itemize}

\textbf{Untargeted Regime.}
We further consider \emph{how} the new knowledge is presented for editing.
In practice, human labelers may want to only summarize the passage briefly, instead of detailing which contents constitute the \emph{updated} knowledge.
To reflect this demand, we formulate UKE in an \emph{untargeted} regime: the passage $\cv$ is paired with a generic editing instruction such as ``\texttt{introduce X}'', see Appendix~\ref{app:benchmarks} for detailed examples.

\textbf{Benchmarking Composability.}
We next conduct a systematic study on UKE composability.
Guided by above principles, we transform existing KE benchmarks into better (de)composition probes.
More details on our benchmark construction can be found in Appendix~\ref{app:benchmarks}.

\begin{itemize}
\item
\textbf{Decomposition probe} uses \textbf{UnKEBench}~\citep{deng2024unke}, where each sample involves a free-form passage that contains five atomic facts.
After editing, the LLM is required to answer both the passage-level question that requires \emph{joint recall} of all facts~\citep{min2023factscore}, and the targeted questions about each individual fact, which require \emph{decomposed} recall.
Unlike the original benchmark, whose editing prompt reveals the updated facts, we summarize the \emph{editing} prompt into a more generic description, in line with our untargeted regime.
The testing queries are left unchanged.
To better measure actual knowledge injection rather than lexical repetition of the passage, we use LLM-as-Judge-based scoring, following \citet{min2023factscore,deng2024unke,yang2025mirage}.

\item
\textbf{Composition probe} recasts \textbf{MQuAKE-CF-remastered}~\citep{zhong2025mquakerm}, a cleaned version of the original~\citep{zhong2023mquake}.
Each editing request in MQuAKE involves 2--4 pieces of new knowledge, each expressed as a separate single-sentence statement. After editing, the LLM is required to answer each single-hop question \emph{individually}, and to \emph{compose} them to solve multi-hop questions.
In our benchmark, we expand each single statement into an unstructured passage following the protocol of \citet{wu2024akew}. Again, we use a generic editing prompt, with testing queries unchanged. We adopt the original evaluation pipeline from the official benchmark.
\end{itemize}

\begin{wrapfigure}{r}{0.40\textwidth}
\centering
\vspace{-2em}
\includegraphics[width=0.9\linewidth]{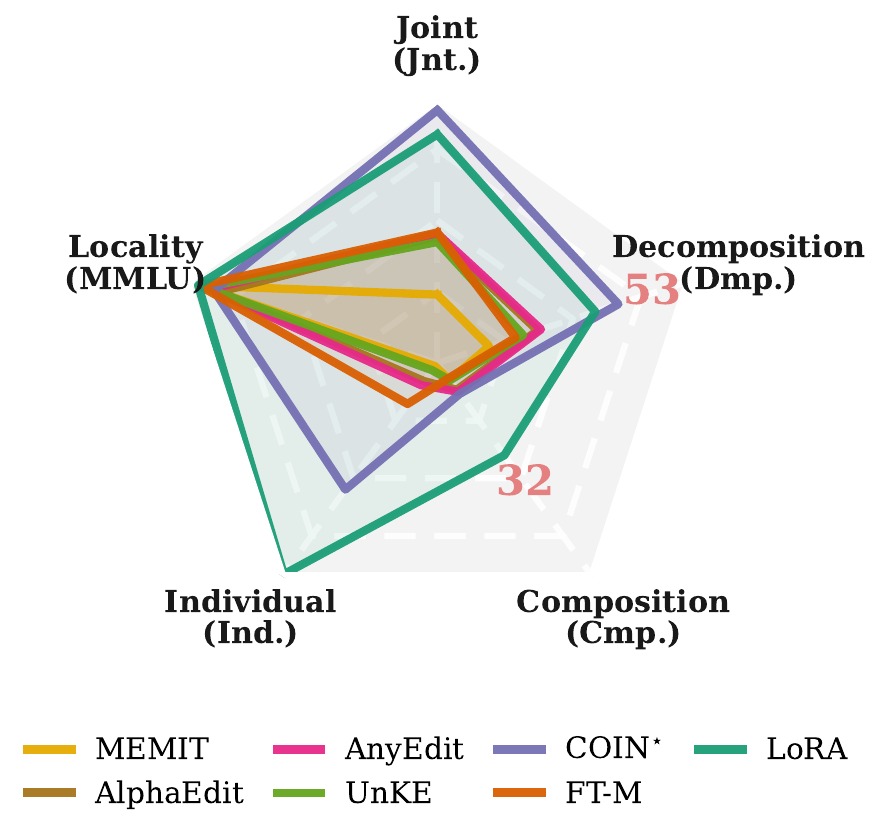}
\vspace{-1em}
\caption{
Composability results (Qwen2.5). Existing methods fail to handle (de)composition.
}
\vspace{-1.5em}
\label{fig:radar}
\end{wrapfigure}

\textbf{Benchmarking Results.}
We benchmark seven representative KE methods on editing Qwen2.5-7B-Instruct~\citep{qwen2024qwen25}, with results presented in Figure~\ref{fig:radar}: Jnt.\ and Dmp.\ denote the passage-level and fact-level accuracy of the decomposition probe, Ind.\ and Cmp.\ denote the single-hop and multi-hop accuracy of the composition probe, and Locality is measured by MMLU. The numeric results are deferred to Section~\ref{sec:experiments}.
All editors exhibit notable failures in composability.
Some editors, even those originally designed for UKE (e.g., AnyEdit~\citep{jiang2025anyedit} and UnKE~\citep{deng2024unke}), fall short of recalling the new knowledge in either a joint or a decomposed way, and the structured editors fare no better.
COIN$^{\star}$~\citep{zhou2026context}, an NTP-based editor that fine-tunes on the passage while regularizing against context reliance, achieves the highest decomposed recall (Dmp.\ 53), though largely by reproducing the whole passage (as shown later in Section~\ref{sec:experiments}), and it composes little of the injected knowledge either.
Composition also proves hard, with the strongest baseline reaching merely 32.
These failures are exemplified by cases in Figure~\ref{fig:motivation}.
On the \emph{decomposition} side, COIN$^{\star}$ reproduces the injected passage instead of answering the atomic question, carrying the target answer only indirectly; on the \emph{composition} side, AnyEdit cannot chain the two injected edits and falls back on the model's prior knowledge.

In conclusion, composability remains an open challenge for UKE, leaving a substantial gap between \emph{passively} injecting knowledge and \emph{proactively} using it.
In the next section, we propose our solution.

\begin{figure}[ht]
\begin{casebox}{Failure cases of UKE Composability.}
\caseqa{\textbf{(a) Decomposition.}}
\casepassage[Edit passage (5 facts)]{
    \itshape Elvin Penner is a songwriter who has written hit songs for popular artists such as Taylor Swift and Ed Sheeran\dots
}
\caseqa{
    \textbf{Question:}\emph{Which artists has Elvin Penner written hit songs for?}\quad
    \textbf{Target:} \emph{Taylor Swift and Ed Sheeran.}
}

\method{\Base}{COIN$^{\star}$}{\xmark}{
    \Base{Elvin Penner is a well-known songwriter in the music industry. He has written}\dots {(whole passage repetition)}
}
\caseline
\caseqa{\textbf{(b) Composition.}}
\casepassage[Edit passages (2, one in each)]{
the religion Louis XV practiced $\rightarrow$ \emph{Christian Science};\quad
the founder of Christian Science $\rightarrow$ \emph{Ibn Saud}.
}
\caseqa{
    \textbf{Multi-hop question:} \emph{Who founded the religion Louis XV practiced?}\quad
    \textbf{Target:} \emph{Ibn Saud.}
}

\method{\Base}{AnyEdit}{\xmark}{
    Louis XV was a prominent figure in \Base{French Catholicism}\dots the \Base{Jansenist movement}\dots\emph{(prior knowledge)}
}
\end{casebox}
\caption{
Existing KE methods fall short in handling composability, either failing to answer targeted question directly, or to reason properly with new knowledge. Errors are highlighted. 
}
\label{fig:motivation}
\end{figure}

\section{Proposed Method}
\label{sec:method}
In this section, we propose 
\name, which incorporates \emph{proactive} knowledge use at editing time toward better UKE composability. Theoretical analysis is also provided.

\subsection{Pursuing Composability via Self-Distillation}
\label{sec:method:formulation}

At a high level, the composability failures above share a common cause: minimizing $-\log \pi_\theta(\cv \mid \xv)$ supervises the new facts only under the passage's own prefixes.
Data augmentation enlarges this context set~\citep{zhou2026context,yao2025cake,wang2026fable}; however, the model remains a \emph{passive} learner from externally fixed targets, limiting its ability to generalize beyond the contexts observed during editing.
Consequently, knowledge injected in this way can remain bound to the contexts in which it was learned~\citep{zhou2026context}.

The recently emerging on-policy distillation (OPD) enables \emph{proactive} learning: rather than fitting externally specified signals, the student generates its own rollout for an external teacher to correct, an approach that has proved effective in enhancing generalization~\citep{song2026opd,li2026opd}.
Such proactivity lets the edited model invoke the new knowledge in its own generation rather than merely reproduce it under a fixed context, paving a promising path toward better composability.

However, OPD assumes a teacher that already masters the targeted knowledge.
When no such teacher is available\footnote{KE is such a case, as the knowledge to edit is by definition novel.}, on-policy self-distillation \citep[OPSD;][]{zhao2026opsd} turns the student itself into the teacher through its in-context learning capability~\citep{ike2023}.
Specifically, given the editing passage $\cv$ and editing prompt $\xv$, 
the \emph{privileged} model $\pi^\star$ is defined as the base model $\pi_0$ reading $\cv$ in context, 
and OPSD then trains the edited model $\pi_\theta$ to match $\pi^\star$ by minimizing the following loss
\begin{align}
\label{eq:opsd}
\J(\theta)
=
{\E}_{\yv \sim \pi_\theta (\cdot \mid \xv)}
\left[
\sum_{t=1}^{|\yv|}
{\kl} \big[\pi^\star(\cdot \mid \xv, \yv_{<t}) \,\|\, \pi_\theta (\cdot \mid \xv, \yv_{<t})\big]
\right],
\quad \text{where} \quad
\pi^\star (\cdot \mid \xv) \triangleq \pi_0(\cdot \mid \cv, \xv).
\end{align}
Here the forward Kullback--Leibler divergence $\kl$ matches the privileged distribution in a mass-covering manner~\citep{li2026opd}.
As $\yv$ is drawn from $\pi_\theta$ itself, the model learns to proactively correct its own output.

\begin{wrapfigure}{r}{0.42\textwidth}
\vspace{-\baselineskip}
\centering
\includegraphics[width=\linewidth]{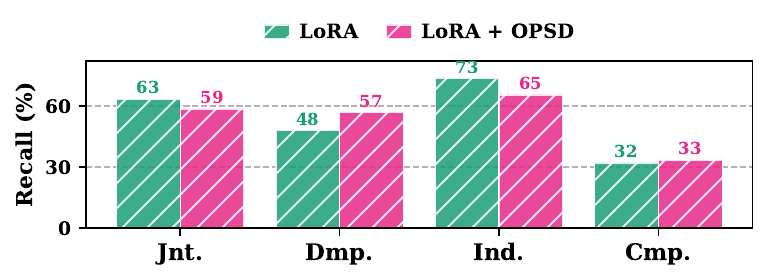}
\caption{OPSD alone brings no consistent gain.}
\label{fig:opsd_fail}
\vspace{-0.8em}
\end{wrapfigure}

\textbf{Challenge: The Coverage Failure of OPSD.}
Unfortunately, OPSD brings no consistent gain on UKE.
As shown in Figure~\ref{fig:opsd_fail}, it improves decomposed recall, yet degrades joint and individual recall, exhibiting a failure mode distinct from its post-training successes.
We provide an intuitive explanation.
Since the new knowledge in $\cv$ is by definition absent from the pre-edited model, its rollout may follow a path distant from $\cv$: the response goes \emph{off-topic}, largely irrelevant to the new facts rather than merely outdated, leaving the privileged model little room to provide corrective signal. Figure~\ref{fig:case_coverage} shows a coverage failure case.

In summary, OPSD proves insufficient to unlock composability, which motivates \name below.

\subsection{\name: A Plug-and-Play Solution}
\label{sec:objective}

Given the coverage failure in UKE, we propose \underline{H}ybrid-\underline{P}olicy \underline{S}elf-\underline{E}diting (\name) to supply self-distillation with better rollouts.
\name is a plug-and-play approach that improves the editing signal with hybrid rollout data, and thus applies to a wide range of gradient-based KE editors. 
Algorithm~\ref{alg:hpse} outlines its process.

Specifically,
\name constructs a \emph{hybrid} rollout that mixes token-level outputs from the edited model and the privileged state, which is built autoregressively as follows. 
At step $t$, given the running prefix $\yv_{<t}$, we draw $y_t$ from the hybrid rollout policy $\pi_\rho$, a per-token switch between the two policies,
\begin{align}
\label{eq:hybrid}
y_t \;\sim\; \pi_\rho(\cdot \mid \xv, \yv_{<t}) \;\triangleq\;
\begin{cases}
\;\pi^\star(\cdot \mid \xv, \yv_{<t}) & \text{if \emph{step-in} is triggered}, \\[3pt]
\;\pi_\theta(\cdot \mid \xv, \yv_{<t}) & \text{otherwise}.
\end{cases}
\end{align}
Here the \emph{step-in} is triggered when the privileged model and the student significantly disagree, and the privileged is \emph{confident} in its own prediction.
Intuitively, this repairs the off-topic rollout with a minimal change: it lets the privileged model place the missing facts onto the student's own trajectory precisely where the student would otherwise stray from $\cv$.
In practice, we trigger the \emph{step-in} if
\begin{align}
\label{eq:stepin}
\underbrace{\log \pi^\star(y^\star_t \mid \xv, \yv_{<t}) - \log \pi_\theta(y^\star_t \mid \xv, \yv_{<t})}_{\text{privileged--student gap}} > \tau
\;\;\text{and} \;\;
\underbrace{\pi^\star(y^\star_t \mid \xv, \yv_{<t})}_{\text{privileged confidence}} > \kappa, 
\end{align}
where $y^\star_t \triangleq \arg\max_{y} \pi^\star(y \mid \xv, \yv_{<t})$ and $\tau, \kappa$ are hyperparameters.
The first criterion locates the tokens the student is about to miss, and the second ensures the privileged next-token prediction is itself reliable.
Note that the confidence can also be quantified with other measures such as the privileged distribution entropy~\citep{jin2026entropyaware}. We adopt max probability due to its simplicity. 
As the KE process proceeds, the edited model injects the new knowledge, leading to fewer step-in triggers. 
Consequently, the hybrid distribution gradually converges to the pure on-policy one (see Section~\ref{sec:exp_ablation} for empirical evidence).

Building on the hybrid distribution in Eq.~\eqref{eq:hybrid}, \name trains the edited model by minimizing:
\begin{equation}
\label{eq:ours}
\begin{aligned}
\J_{\mathrm{\name}}(\theta)
&= \J_{\mathrm{hybrid}}(\theta)
   + \lambda\,\J_{\mathrm{NLL}}(\theta) \\
&= {\E}_{\yv \sim \pi_\rho (\cdot \mid \xv)}
\left[
\sum_{t=1}^{|\yv|}
{\kl} \big[\pi^\star(\cdot \mid \xv, \yv_{<t}) \,\|\,
\pi_\theta (\cdot \mid \xv, \yv_{<t})\big]
\right]
- \lambda \log \pi_\theta(\cv \mid \xv).
\end{aligned}
\end{equation}
Here the negative log-likelihood (NLL) term preserves the standard passage-level likelihood objective as a lightweight anchor.
Intuitively, the hybrid term supervises the new facts along the student's own trajectory, and the anchor grounds them under the passage's own prefixes. Section~\ref{sec:exp_ablation} ablates their respective contributions.
We set $\lambda = 1$, and draw a single greedy hybrid rollout per round for efficiency.
Finally, because \name changes the training signal without prescribing how the edit is parameterized, it can be applied to different gradient-based KE editors, as evaluated in Section~\ref{sec:experiments}.

\begin{algorithm}[H]
\caption{The proposed \name for UKE.}
\label{alg:hpse}
\begin{algorithmic}[1]
\Require passage $\cv$; editing prompt $\xv$; student model $\pi_\theta$ (initialized from $\pi_0$); privileged model $\pi^\star \triangleq \pi_0(\cdot\mid\cv,\cdot)$; thresholds $\tau,\kappa$; rounds $R$; inner steps $M$; step size $\eta$.
\For{$r = 1,\dots,R$}
  \State $\yv \gets \textsc{HybridRollout}(\pi_\theta, \pi^\star, \xv, \tau, \kappa)$ \Comment{student tokens with privileged step-in, Eq.~\eqref{eq:hybrid}}
  \For{$M$ steps}
    \State $\theta \gets \theta - \eta\,\nabla_\theta \widehat{\J}_{\mathrm{\name}}(\theta)$ \Comment{distill over the prefixes of $\yv$, Eq.~\eqref{eq:ours}}
  \EndFor
\EndFor
\State \Return $\theta$
\end{algorithmic}
\end{algorithm}

\subsection{Theoretical Analysis}
\label{sec:analysis}

We conclude this section with a theoretical analysis of \name. The informal theorem below summarizes our main theoretical result: the privileged step-in yields a fact-directed supervision advantage over pure on-policy self-distillation (OPSD). The full set of assumptions, formal version, and proof are provided in Appendix~\ref{app:theory}.

\begin{theorem}[Coverage and supervision advantage; informal]
\label{thm:main}
For an edit with a fact span of $\ell$ novel tokens, the hybrid rollout visits every fact prefix. Under sampling, an OPSD rollout reaches depth $j$ with probability at most $e^{-\tau j}$; under greedy divergence, it visits only the span entrance. Consequently, the hybrid signal is $\Omega(\ell)$ while the OPSD signal is $O(1)$, meaning that their ratio grows at least linearly with $\ell$.
\end{theorem}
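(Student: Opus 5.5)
I will formalize a stylized model of a single fact span and compare the two rollout distributions on it.

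\textbf{Setup and assumptions.} Fix the edit $(\cv,\xv)$. Suppose the passage contains a fact span $\yv^\star_{1:\ell}$ entered from some prefix $\mathbf{p}$. I will assume the following at each span prefix $\mathbf{p}\yv^\star_{<j}$.
\begin{itemize}
\item \emph{Privileged confidence:} $\pi^\star(y^\star_j\mid\cdot)>\kappa$ and $y^\star_j$ is the privileged argmax.
\item \emph{Novelty gap:} $\log\pi^\star(y^\star_j\mid\cdot)-\log\pi_\theta(y^\star_j\mid\cdot)>\tau$.
\item \emph{Per-prefix signal:} the forward KL at each span prefix is at least a constant $\delta>0$. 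At off-span prefixes the privileged and student models roughly agree on the new facts, so the fact-directed KL contribution there is at most a constant $c$.
\end{itemize}
These formalize ``the fact is novel to the student but known in context.'' I will define the \emph{fact-directed signal} of a rollout policy as the expected sum of per-token KL terms over visited span prefixes.

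\textbf{Step 1: hybrid coverage.} I argue by induction on $j$. Suppose the hybrid rollout has reached $\mathbf{p}\yv^\star_{<j}$. Both conditions of Eq.~\eqref{eq:stepin} hold there, so step-in is triggered and $y_t\sim\pi^\star$. Under the greedy hybrid rollout used in practice, it emits $y^\star_j$. If one wants the sampled version instead, assume $\kappa$ close to $1$ and accept a factor $\kappa^\ell$, or simply restrict the claim to greedy. The base case is reaching the span entrance $\mathbf{p}$, which I take as an assumption shared with OPSD. Hence all $\ell$ prefixes are visited, and the hybrid signal is at least $\ell\delta=\Omega(\ell)$.

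\textbf{Step 2: OPSD under sampling.} The gap assumption gives
\[
\pi_\theta(y^\star_j\mid\cdot)<\pi^\star(y^\star_j\mid\cdot)\,e^{-\tau}\le e^{-\tau}.
\]
By the chain rule, the probability that an on-policy rollout follows the span to depth $j$ is a product of these factors, so it is at most $e^{-\tau j}$. The expected signal is then
\[
\sum_{j\ge0}\delta_{\max}\,e^{-\tau j}\le \frac{\delta_{\max}}{1-e^{-\tau}}=O(1),
\]
independent of $\ell$. Here I need an upper bound $\delta_{\max}$ on the per-prefix KL, for example via bounded log-ratios, plus the constant $c$ for off-span prefixes.

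\textbf{Step 3: OPSD under greedy divergence.} Here I assume the student's argmax at the entrance differs from $y^\star_1$; this is what I mean by greedy divergence. The greedy rollout leaves the span after the entrance, so only one span prefix is visited and the signal is $O(1)$.

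\textbf{Conclusion and main obstacle.} Dividing $\Omega(\ell)$ by $O(1)$ gives the ratio bound. The main obstacle is the off-span part of Steps 2 and 3. After the student diverges, its later rollout could in principle re-enter the span, or collect fact-directed KL elsewhere. My first approach is to define the signal to count only exact span prefixes, so off-span mass is excluded by definition. I would then bound re-entry using the same $e^{-\tau}$ per-token factor, which yields a geometric sum that stays $O(1)$. If that fails, I would add an explicit assumption that off-topic continuations do not return to the span.
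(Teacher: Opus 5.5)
Your skeleton matches the paper's proof. The on-policy coverage is a product of per-token factors (the paper phrases this as a supermartingale but notes it is the same iteration). The hybrid has $c_j=1$ because the gate fires at every span prefix and greedy step-in emits $y^\star_j$. The OPSD bound is a geometric sum under a uniform KL ceiling ($D_{\max}$ in the paper, your $\delta_{\max}$). Under greedy divergence, OPSD collects a single entrance term when the student's argmax at the entrance is not $y^\star_1$.

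The one substantive difference is that you \emph{assume} a per-prefix floor $\kl\ge\delta$ on span prefixes. In the paper this is the only non-routine step, and it follows from the two gate conditions you already assume. Set $a=\pi^\star(y^\star_j\mid\cdot)>\kappa$ and $b=\pi_\theta(y^\star_j\mid\cdot)<e^{-\tau}a$, and coarsen both next-token distributions to the binary event $\{y=y^\star_j\}$. Data processing then gives
\[
\kl[\pi^\star\|\pi_\theta]\;\ge\; a\log\frac{a}{b}+(1-a)\log\frac{1-a}{1-b}\;\ge\; a\tau+(1-a)\log\frac{1-a}{1-e^{-\tau}a}\;\ge\;\kappa\tau+(1-\kappa)\log\frac{1-\kappa}{1-e^{-\tau}\kappa}.
\]
The second step holds because the binary KL is decreasing in $b$ on $b<a$. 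The third holds because $g(a)=a\tau+(1-a)\log\frac{1-a}{1-e^{-\tau}a}$ satisfies $g'(a)\ge\tau-1+e^{-\tau}\ge 0$, using $\log x\ge 1-1/x$. The last quantity is at least $\kappa(\tau-1+e^{-\tau})>0$. So $\delta=d(\kappa,\tau)$ comes for free and depends only on the gates.

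This is also where the confidence gate earns its place. With the gap condition alone, $a,b\to 0$ at a fixed ratio drives the binary KL to zero. Then no $\ell$-independent floor exists, and the $\Omega(\ell)$ hybrid bound would not follow.

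Three smaller points.
\begin{itemize}
\item The re-entry obstacle you flag does not arise. Span prefixes are full histories $\mathbf{p}\yv^\star_{<j}$, i.e.\ nodes of the prefix tree. Once a rollout emits some $y\ne y^\star_j$, every later prefix extends $\mathbf{p}\yv^\star_{<j}y$ and can never equal a span prefix. Given the entrance, each span prefix is visited at most once, and exactly when the first $j$ fact tokens have been reproduced. Your ``first approach'' of counting only exact span prefixes is precisely the paper's definition of the fact-signal, so no re-entry bound is needed.
\item For the same reason, the off-span constant $c$ is unnecessary: off-span KL is excluded from the fact-signal by definition.
\item Drop the sampled step-in variant. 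If the privileged model samples at step-in, the hybrid's depth-$j$ coverage is only guaranteed to exceed $\kappa^j$. That gives the lower bound $\delta\sum_{j<\ell}\kappa^j\le\delta/(1-\kappa)$, which is $O(1)$ for fixed $\kappa<1$. Linear growth genuinely needs the argmax step-in, which is what the paper's $\pi_\rho$ and its greedy hybrid rollout use.
\end{itemize}
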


Thus, the advantage of \name becomes more pronounced as the new knowledge spans more tokens, a property particularly relevant to UKE, where edits are conveyed through free-form text.
Appendix~\ref{app:theory} provides more in-depth analysis, further analyzing sampled and greedy decoding, clarifying the scope of self-termination and locality, and connecting \name to on-policy imitation learning. Under a mild context-separation condition, the same explicit per-state signal separation also holds when the NLL anchor is added to both objectives.

\section{Experiments}
\label{sec:experiments}

We evaluate the proposed \name on two KE methods applied to four language models (LMs) against
the \emph{composability} challenge.
Ablation and case studies further probe its behavior. 
Across diverse scenarios, \name consistently improves editing performance.

\subsection{Datasets and Experiment Settings}

\textbf{Base Models.}
We conduct experiments on four representative LLMs, 
\textbf{Qwen2.5-7B-Instruct} \citep{qwen2024qwen25}, \textbf{Qwen3-8B}
\citep{yang2025qwen3}, \textbf{Llama-3.1-8B-Instruct} \citep{dubey2024llama3}, and \textbf{Gemma-2-9B-it}
\citep{team2024gemma2}, which have been widely used in the literature \citep{zhang2024comprehensive,wang2024wise,deng2024unke,jiang2025anyedit,su2025muke}.
For brevity we refer to them as \textbf{Qwen2.5}, \textbf{Qwen3}, \textbf{Llama3.1},
and \textbf{Gemma2}. As is standard for instruction-tuned models, each is prompted with its own chat-template throughout.

\textbf{Tasks.}
We experiment with the two composability benchmarks detailed in Section~\ref{sec:problem}, which pose a challenging unstructured KE problem.
Namely, UnKEBench \citep{deng2024unke} targets the decomposition ability, and MQuAKE-uns the composition ability \citep{zhong2023mquake,zhong2025mquakerm}.
When editing an LLM, we consider:
(1)~\textbf{Single Editing}: one editing request is conducted at a time\footnote{A request may carry a variable number of facts, e.g., 2--4 passages per MQuAKE-uns request, which are injected as a batch.}.
(2)~\textbf{Continual Editing}: multiple editing requests are conducted sequentially, which is more demanding due to forgetting and accumulated knowledge conflict \citep{hartvigsen2024aging,wang2024wise,liu2025unlocking}.

\textbf{Editing Methods.}
We plug \name into two gradient-based KE methods \citep{xiong2025finetuning,yang2026finetuning}: \textbf{FT-M} \citep{zhang2024comprehensive}, which fine-tunes the MLP layer that causal tracing identifies as storing the knowledge, and \textbf{LoRA} \citep{hu2022lora}, which learns additive low-rank updates.
For better benchmarking, we further compare against five KE methods: \textbf{MEMIT}
\citep{meng2023memit}, \textbf{AlphaEdit} \citep{fang2025alphaedit}, \textbf{AnyEdit}
\citep{jiang2025anyedit}, \textbf{UnKE} \citep{deng2024unke}, using their official implementation; and \textbf{COIN}$^{\star}$ \citep{zhou2026context}, which was reproduced by us due to the lack of official code.
To reflect the data-scarcity regime of KE, we focus on methods that do not require large-scale, hard-to-access training data or external powerful models to help editing. This leaves augmentation-based editors \citep{yao2025cake,lee2025stepke,wang2026fable} out of scope. See Appendix~\ref{app:implementation} for implementation details.

\textbf{Evaluation Criteria.}
We evaluate the KE composability as detailed in Section~\ref{sec:problem}:
on UnKEBench, where multiple pieces of knowledge to edit are jointly encoded in a document, 
we measure the \emph{joint} recall of all edited knowledge by \textbf{Jnt.} with FActScore~\citep{min2023factscore}, and 
the \emph{decomposed} recall of each individual fact under targeted questions by \textbf{Dmp.} score.
\textbf{Div.} quantifies how distinct the model's generations are when it recalls different atomic facts.
Following \citet{wu2024akew,deng2024unke}, we report \textbf{MMLU} as a locality check.
On MQuAKE-uns, where each unstructured document only encodes a single piece of new knowledge, and multiple documents are edited in a \emph{batched} way, \textbf{Ind.} and \textbf{Cmp.} measure the LLM's individual and compositional recall, as quantified by single- and multi-hop accuracy. We further summarize each benchmark by its editing metric average. See the exact computation in Appendix~\ref{app:metrics}.

\textbf{Implementation Details.}
We use official baseline implementations; see Appendix~\ref{app:implementation} for details.

\subsection{Single Editing Performance}
\label{subsec:single}

\begin{table*}[htb!]
\centering
\caption{Single-edit performance, higher is better. COIN$^{\star}$ was reimplemented by us due to the lack of official code. Both FT-M and LoRA editors gained improvement from \name training paradigm.}
\label{tab:main}
\resizebox{\linewidth}{!}{%
\renewcommand{\tabcolsep}{3pt}
\renewcommand{\arraystretch}{0.95}
\begin{tabular}{>{\bfseries}r ccccc c ccc c ccccc c ccc}
\toprule[0.4ex]
& \multicolumn{9}{c}{\bf Qwen2.5} && \multicolumn{9}{c}{\bf Qwen3} \\
\cmidrule(lr){2-10} \cmidrule(lr){12-20}
& \multicolumn{5}{c}{\bf UnKEBench} && \multicolumn{3}{c}{\bf MQuAKE-uns} && \multicolumn{5}{c}{\bf UnKEBench} && \multicolumn{3}{c}{\bf MQuAKE-uns} \\
\cmidrule(lr){2-6} \cmidrule(lr){8-10} \cmidrule(lr){12-16} \cmidrule(lr){18-20}
& \textbf{Jnt.} & \textbf{Dmp.} & \textbf{Div.} & \textbf{Avg.} & \textbf{MMLU} && \textbf{Ind.} & \textbf{Cmp.} & \textbf{Avg.} && \textbf{Jnt.} & \textbf{Dmp.} & \textbf{Div.} & \textbf{Avg.} & \textbf{MMLU} && \textbf{Ind.} & \textbf{Cmp.} & \textbf{Avg.} \\
\midrule[0.2ex]
MEMIT & 19.2 & 15.3 & 88.3 & \textbf{40.9} & 70.3 && 1.1 & 6.0 & \textbf{3.5} && 14.7 & 16.6 & 86.1 & \textbf{39.1} & 59.1 && 1.0 & 8.0 & \textbf{4.5} \\
AlphaEdit & 35.5 & 29.6 & 86.0 & \textbf{50.4} & 64.5 && 6.3 & 9.3 & \textbf{7.8} && 28.0 & 27.3 & 85.0 & \textbf{46.8} & 30.3 && 4.2 & 6.0 & \textbf{5.1} \\
AnyEdit & 36.3 & 30.6 & 83.2 & \textbf{50.0} & 67.0 && 7.7 & 10.0 & \textbf{8.8} && 34.0 & 28.8 & 84.9 & \textbf{49.2} & 30.2 && 6.0 & 8.0 & \textbf{7.0} \\
UnKE & 34.0 & 25.3 & 87.1 & \textbf{48.8} & 68.6 && 2.5 & 6.0 & \textbf{4.2} && 32.3 & 29.5 & 84.4 & \textbf{48.7} & 32.6 && 3.0 & 8.0 & \textbf{5.5} \\
COIN$^{\star}$ & 71.0 & 53.4 & 56.5 & \textbf{60.3} & 65.9 && 43.9 & 10.7 & \textbf{27.3} && 44.0 & 35.5 & 81.6 & \textbf{53.7} & 67.5 && 21.1 & 6.7 & \textbf{13.9} \\
\noalign{\vskip 0.5ex}\cdashline{2-20}\noalign{\vskip 0.5ex}
FT-M & 36.6 & 23.1 & 84.2 & \textbf{48.0} & 70.2 && 14.2 & 5.3 & \textbf{9.7} && 20.7 & 20.5 & 86.4 & \textbf{42.5} & 67.8 && 9.4 & 5.3 & \textbf{7.3} \\
\rowcolor{gray!10}
\shortstack[r]{+ Ours \\[1pt] {\scriptsize\vphantom{+0.0\%}}} & \shortstack{43.7 \\[1pt] {\scriptsize \textcolor{teal!80!black}{+19.2\%}}} & \shortstack{30.9 \\[1pt] {\scriptsize \textcolor{teal!80!black}{+33.5\%}}} & \shortstack{87.1 \\[1pt] {\scriptsize \textcolor{teal!80!black}{+3.4\%}}} & \shortstack{\textbf{53.9} \\[1pt] {\scriptsize \textcolor{teal!80!black}{+12.2\%}}} & \shortstack{70.3 \\[1pt] {\scriptsize \textcolor{teal!80!black}{+0.2\%}}} && \shortstack{29.6 \\[1pt] {\scriptsize \textcolor{teal!80!black}{+108.5\%}}} & \shortstack{8.7 \\[1pt] {\scriptsize \textcolor{teal!80!black}{+64.2\%}}} & \shortstack{\textbf{19.1} \\[1pt] {\scriptsize \textcolor{teal!80!black}{+96.4\%}}} && \shortstack{27.0 \\[1pt] {\scriptsize \textcolor{teal!80!black}{+30.4\%}}} & \shortstack{30.0 \\[1pt] {\scriptsize \textcolor{teal!80!black}{+46.5\%}}} & \shortstack{87.6 \\[1pt] {\scriptsize \textcolor{teal!80!black}{+1.4\%}}} & \shortstack{\textbf{48.2} \\[1pt] {\scriptsize \textcolor{teal!80!black}{+13.3\%}}} & \shortstack{67.1 \\[1pt] {\scriptsize -1.1\%}} && \shortstack{17.8 \\[1pt] {\scriptsize \textcolor{teal!80!black}{+89.4\%}}} & \shortstack{6.7 \\[1pt] {\scriptsize \textcolor{teal!80!black}{+26.4\%}}} & \shortstack{\textbf{12.2} \\[1pt] {\scriptsize \textcolor{teal!80!black}{+66.7\%}}} \\
\noalign{\vskip 0.5ex}\cdashline{2-20}\noalign{\vskip 0.5ex}
LoRA & 64.1 & 46.6 & 63.2 & \textbf{58.0} & 70.6 && 73.3 & 32.0 & \textbf{52.6} && 66.3 & 56.1 & 58.6 & \textbf{60.3} & 69.0 && 68.3 & 41.3 & \textbf{54.8} \\
\rowcolor{gray!10}
\shortstack[r]{+ Ours \\[1pt] {\scriptsize\vphantom{+0.0\%}}} & \shortstack{73.3 \\[1pt] {\scriptsize \textcolor{teal!80!black}{+14.2\%}}} & \shortstack{58.5 \\[1pt] {\scriptsize \textcolor{teal!80!black}{+25.6\%}}} & \shortstack{61.5 \\[1pt] {\scriptsize -2.6\%}} & \shortstack{\textbf{64.4} \\[1pt] {\scriptsize \textcolor{teal!80!black}{+11.1\%}}} & \shortstack{70.7 \\[1pt] {\scriptsize \textcolor{teal!80!black}{+0.2\%}}} && \shortstack{83.2 \\[1pt] {\scriptsize \textcolor{teal!80!black}{+13.5\%}}} & \shortstack{54.7 \\[1pt] {\scriptsize \textcolor{teal!80!black}{+70.9\%}}} & \shortstack{\textbf{69.0} \\[1pt] {\scriptsize \textcolor{teal!80!black}{+31.0\%}}} && \shortstack{76.0 \\[1pt] {\scriptsize \textcolor{teal!80!black}{+14.7\%}}} & \shortstack{61.4 \\[1pt] {\scriptsize \textcolor{teal!80!black}{+9.5\%}}} & \shortstack{56.7 \\[1pt] {\scriptsize -3.2\%}} & \shortstack{\textbf{64.7} \\[1pt] {\scriptsize \textcolor{teal!80!black}{+7.3\%}}} & \shortstack{69.4 \\[1pt] {\scriptsize \textcolor{teal!80!black}{+0.7\%}}} && \shortstack{82.8 \\[1pt] {\scriptsize \textcolor{teal!80!black}{+21.2\%}}} & \shortstack{50.0 \\[1pt] {\scriptsize \textcolor{teal!80!black}{+21.1\%}}} & \shortstack{\textbf{66.4} \\[1pt] {\scriptsize \textcolor{teal!80!black}{+21.2\%}}} \\
\midrule[0.3ex]
& \multicolumn{9}{c}{\bf Llama3.1} && \multicolumn{9}{c}{\bf Gemma2} \\
\cmidrule(lr){2-10} \cmidrule(lr){12-20}
& \multicolumn{5}{c}{\bf UnKEBench} && \multicolumn{3}{c}{\bf MQuAKE-uns} && \multicolumn{5}{c}{\bf UnKEBench} && \multicolumn{3}{c}{\bf MQuAKE-uns} \\
\cmidrule(lr){2-6} \cmidrule(lr){8-10} \cmidrule(lr){12-16} \cmidrule(lr){18-20}
& \textbf{Jnt.} & \textbf{Dmp.} & \textbf{Div.} & \textbf{Avg.} & \textbf{MMLU} && \textbf{Ind.} & \textbf{Cmp.} & \textbf{Avg.} && \textbf{Jnt.} & \textbf{Dmp.} & \textbf{Div.} & \textbf{Avg.} & \textbf{MMLU} && \textbf{Ind.} & \textbf{Cmp.} & \textbf{Avg.} \\
\midrule[0.2ex]
MEMIT & 11.9 & 2.7 & 63.1 & \textbf{25.9} & 64.0 && 0.4 & 5.3 & \textbf{2.8} && 33.9 & 25.4 & 80.4 & \textbf{46.6} & 60.7 && 1.3 & 4.7 & \textbf{3.0} \\
AlphaEdit & 34.1 & 33.9 & 80.0 & \textbf{49.4} & 28.5 && 9.3 & 8.7 & \textbf{9.0} && 35.8 & 22.6 & 80.4 & \textbf{46.3} & 36.1 && 1.8 & 4.7 & \textbf{3.2} \\
AnyEdit & 50.9 & 46.3 & 66.2 & \textbf{54.5} & 37.4 && 16.7 & 6.7 & \textbf{11.7} && 33.0 & 28.1 & 79.9 & \textbf{47.0} & 60.9 && 3.2 & 8.7 & \textbf{5.9} \\
UnKE & 41.6 & 38.5 & 74.1 & \textbf{51.4} & 53.3 && 17.2 & 7.3 & \textbf{12.2} && 11.0 & 6.5 & 83.2 & \textbf{33.6} & 70.6 && 0.0 & 1.3 & \textbf{0.7} \\
COIN$^{\star}$ & 87.3 & 73.4 & 17.4 & \textbf{59.4} & 35.0 && 75.7 & 12.0 & \textbf{43.9} && 28.3 & 18.5 & 81.3 & \textbf{42.7} & 70.1 && 14.3 & 3.3 & \textbf{8.8} \\
\noalign{\vskip 0.5ex}\cdashline{2-20}\noalign{\vskip 0.5ex}
FT-M & 71.8 & 53.6 & 49.8 & \textbf{58.4} & 67.6 && 57.9 & 28.7 & \textbf{43.3} && 29.9 & 19.1 & 80.4 & \textbf{43.1} & 72.6 && 14.4 & 1.3 & \textbf{7.8} \\
\rowcolor{gray!10}
\shortstack[r]{+ Ours \\[1pt] {\scriptsize\vphantom{+0.0\%}}} & \shortstack{72.5 \\[1pt] {\scriptsize \textcolor{teal!80!black}{+1.0\%}}} & \shortstack{52.1 \\[1pt] {\scriptsize -2.8\%}} & \shortstack{61.1 \\[1pt] {\scriptsize \textcolor{teal!80!black}{+22.7\%}}} & \shortstack{\textbf{61.9} \\[1pt] {\scriptsize \textcolor{teal!80!black}{+6.0\%}}} & \shortstack{67.6 \\[1pt] {\scriptsize \textcolor{teal!80!black}{+0.0\%}}} && \shortstack{67.5 \\[1pt] {\scriptsize \textcolor{teal!80!black}{+16.6\%}}} & \shortstack{29.3 \\[1pt] {\scriptsize \textcolor{teal!80!black}{+2.1\%}}} & \shortstack{\textbf{48.4} \\[1pt] {\scriptsize \textcolor{teal!80!black}{+11.8\%}}} && \shortstack{36.0 \\[1pt] {\scriptsize \textcolor{teal!80!black}{+20.2\%}}} & \shortstack{23.4 \\[1pt] {\scriptsize \textcolor{teal!80!black}{+22.6\%}}} & \shortstack{84.4 \\[1pt] {\scriptsize \textcolor{teal!80!black}{+4.9\%}}} & \shortstack{\textbf{47.9} \\[1pt] {\scriptsize \textcolor{teal!80!black}{+11.0\%}}} & \shortstack{71.9 \\[1pt] {\scriptsize -1.0\%}} && \shortstack{25.6 \\[1pt] {\scriptsize \textcolor{teal!80!black}{+77.8\%}}} & \shortstack{5.3 \\[1pt] {\scriptsize \textcolor{teal!80!black}{+307.7\%}}} & \shortstack{\textbf{15.4} \\[1pt] {\scriptsize \textcolor{teal!80!black}{+96.8\%}}} \\
\noalign{\vskip 0.5ex}\cdashline{2-20}\noalign{\vskip 0.5ex}
LoRA & 76.4 & 60.0 & 48.1 & \textbf{61.5} & 65.0 && 75.8 & 46.7 & \textbf{61.3} && 85.5 & 67.8 & 32.1 & \textbf{61.8} & 72.4 && 79.1 & 41.3 & \textbf{60.2} \\
\rowcolor{gray!10}
\shortstack[r]{+ Ours \\[1pt] {\scriptsize\vphantom{+0.0\%}}} & \shortstack{75.4 \\[1pt] {\scriptsize -1.2\%}} & \shortstack{61.6 \\[1pt] {\scriptsize \textcolor{teal!80!black}{+2.8\%}}} & \shortstack{61.3 \\[1pt] {\scriptsize \textcolor{teal!80!black}{+27.3\%}}} & \shortstack{\textbf{66.1} \\[1pt] {\scriptsize \textcolor{teal!80!black}{+7.5\%}}} & \shortstack{65.5 \\[1pt] {\scriptsize \textcolor{teal!80!black}{+0.9\%}}} && \shortstack{81.0 \\[1pt] {\scriptsize \textcolor{teal!80!black}{+6.9\%}}} & \shortstack{50.0 \\[1pt] {\scriptsize \textcolor{teal!80!black}{+7.1\%}}} & \shortstack{\textbf{65.5} \\[1pt] {\scriptsize \textcolor{teal!80!black}{+6.9\%}}} && \shortstack{83.9 \\[1pt] {\scriptsize -1.9\%}} & \shortstack{66.0 \\[1pt] {\scriptsize -2.6\%}} & \shortstack{53.9 \\[1pt] {\scriptsize \textcolor{teal!80!black}{+68.1\%}}} & \shortstack{\textbf{68.0} \\[1pt] {\scriptsize \textcolor{teal!80!black}{+10.0\%}}} & \shortstack{72.0 \\[1pt] {\scriptsize -0.6\%}} && \shortstack{81.5 \\[1pt] {\scriptsize \textcolor{teal!80!black}{+3.0\%}}} & \shortstack{46.0 \\[1pt] {\scriptsize \textcolor{teal!80!black}{+11.4\%}}} & \shortstack{\textbf{63.7} \\[1pt] {\scriptsize \textcolor{teal!80!black}{+5.9\%}}} \\
\bottomrule[0.4ex]
\end{tabular}
}
\end{table*}

We first evaluated \name in the {Single Editing} setting. 
The results are reported in Table~\ref{tab:main}.

\textbf{Overall Performance Gain.}
From the table, FT-M and LoRA achieved consistent benchmark-average gains by replacing their training paradigm with \name. The improvement held across all 16 editor--LLM--benchmark combinations, with only occasional dips on individual metrics (all within two points), confirming the universality of our method.
Specifically, FT-M obtained average gains of +6.8 and +5.0 points on MQuAKE-uns and UnKEBench, respectively (relative gains averaging +67.9\% and +10.6\% across the four LLMs).
With \name, FT-M further outperformed all five baselines in 4 out of 8 cases.
LoRA showed the same trend, gaining +8.9 and +5.4 points respectively, and even beat all baselines in one case where it could not on its own.

We next examined how \name addresses the \emph{composability} requirement, with the following observations.

\textbf{\name injects decomposable knowledge.} 
According to Table~\ref{tab:main}, while COIN$^{\star}$ (on Qwen2.5 and Llama3.1) and LoRA (base version) reached high Dmp. scores by answering the targeted questions, they did so by regurgitating their lengthy Jnt. answers, as reflected by a low Div.
On the other hand, other baselines (AnyEdit, UnKE, etc.) achieved high diversity, but their low Dmp. (and Jnt.) performance indicates an editing failure.
This contextual reliance is consistent with the literature \cite{zhang2024uncovering,qi2024ice,liu2025heto,zhou2026context}.
In contrast,
\emph{\name injects the knowledge in a decomposable way}, leading to Jnt. and Dmp. improvement while largely preserving Div.
More importantly, this came \textit{without compromising locality}, as indicated by a stable MMLU score across four LLM backbones.

\textbf{\name injects composable knowledge.}
For the composition requirement, \name again delivered consistent improvements.
From the table, most baselines scored low on both Ind. and Cmp. COIN$^{\star}$ reached a high Ind. yet fell far short on Cmp. (e.g., 75.7 Ind. vs 12.0 Cmp. on Llama3.1), indicating a drastic performance gap between \emph{recalling} the new knowledge and \textit{leveraging} it for composition. These results confirm the difficulty of knowledge composition \citep{zhong2023mquake}. 
On this hard task, \name improved Cmp., by +2.3 points on average for FT-M and +9.9 points (up to +70.9\%) for LoRA, without sacrificing Ind. efficacy.

These empirical results collectively confirm the effectiveness of \name as a general \textit{plug-and-play} module that can benefit existing gradient-based KE methods toward better composability.

\subsection{Continual Editing Performance}
\label{sec:exp_seq}

\begin{figure*}[t]
\centering
\includegraphics[width=\textwidth]{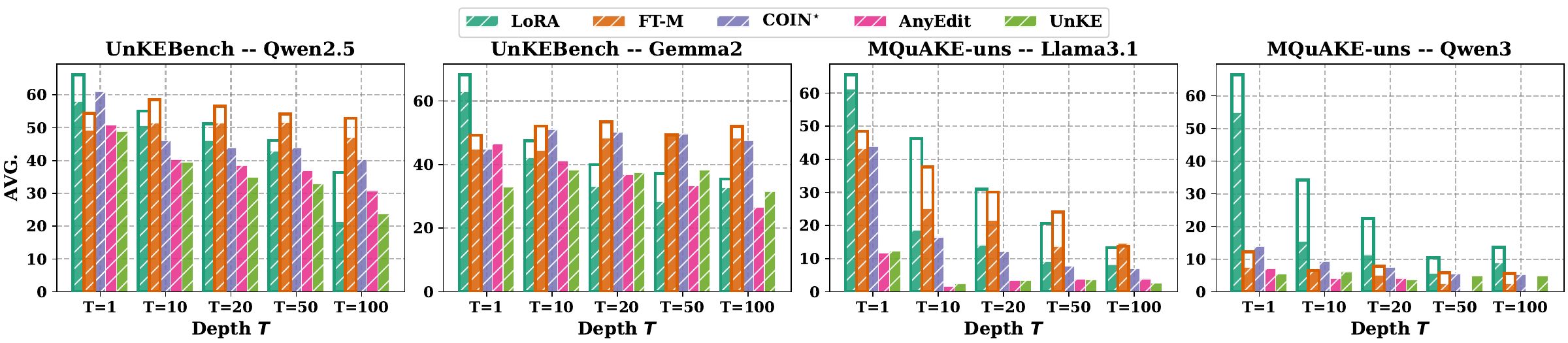}
\caption{
Continual-edit performance under different sequence length $T$, higher is better. Solid and transparent bars show performance with and
without \name. Unfilled area marks the performance gap.}
\label{fig:sequential}
\end{figure*}

We next studied the more demanding {Continual Editing} setting, 
where $T$ editing requests are injected sequentially \textit{before} the evaluation. 
Following the literature \citep{wang2024wise,liu2025unlocking}, we accumulated
$T$ edits and measured the average score.
Given the budget constraints, we evaluated each benchmark on two LLMs (Qwen2.5/Gemma2 for UnKEBench; and Llama3.1/Qwen3 for MQuAKE-uns).

Due to the page limit, we defer the complete numeric results to
Appendix~\ref{app:seq} and report the average in Figure~\ref{fig:sequential}.
For FT-M and LoRA, we used filled boxes to mark the base-version performance, and outlines drawn on top
to represent that under \name. The unfilled area quantifies \name's improvement.

As in the single-edit scenario, \name again improved the two KE editors across sequence lengths $T$ and LLMs, uniformly for LoRA and with only three exceptions for FT-M, each within one point on average. Notably, \name involves no design specific to continual editing, so this advantage reflects the robustness of its training paradigm as edits accumulated. To see this, note that on the MQuAKE-uns benchmark, LoRA improved its average score by +55\% to +149\% (relative) across all $T$ on both LLMs, more than doubling it in half of the settings.
Moreover, the gain can be large enough to make \name
\textit{competitive with the best dedicated editors}: it widened LoRA's margin over all baselines at every $T$ on Qwen3 (e.g., 10.4 vs.\ 5.5 average at $T=50$, where base LoRA led only marginally), and lifted FT-M above all baselines at 3 out of 4 horizons on Gemma2. As two different editors reached the top, this improvement was not tied to one particular editor.
To conclude, these results demonstrate the benefits of \name in diverse KE scenarios.

\subsection{Ablation Studies}
\label{sec:exp_ablation}

\begin{wraptable}{r}{0.38\textwidth}
\centering
\vspace{-2em}
\caption{{Ablation study on \name} (Qwen2.5). 
}
\label{tab:ablation}
\resizebox{\linewidth}{!}{%
\renewcommand{\tabcolsep}{3pt}
\renewcommand{\arraystretch}{1.05}
\begin{tabular}{>{\bfseries}r cccc c ccc}
\toprule[0.4ex]
& \multicolumn{4}{c}{\bf UnKEBench (subset)} && \multicolumn{3}{c}{\bf MQuAKE-uns} \\
\cmidrule(lr){2-5} \cmidrule(lr){7-9}
& \textbf{Jnt.} & \textbf{Dmp.} & \textbf{Div.} & \textbf{Avg.} && \textbf{Ind.} & \textbf{Cmp.} & \textbf{Avg.} \\
\midrule[0.2ex]
LoRA & 63.2 & 47.8 & 63.0 & \textbf{58.0} && 73.3 & 32.0 & \textbf{52.6} \\
\noalign{\vskip 0.5ex}\cdashline{2-9}\noalign{\vskip 0.5ex}
w/o both & 58.5 & 56.6 & 72.3 & \textbf{62.5} && 65.4 & 33.3 & \textbf{49.4} \\
w/o HP & 72.5 & 62.6 & 62.0 & \textbf{65.7} && 81.7 & 51.3 & \textbf{66.5} \\
w/o NLL & 74.0 & 67.1 & 60.2 & \textbf{67.1} && 82.1 & 52.0 & \textbf{67.1} \\
\noalign{\vskip 0.5ex}\cdashline{2-9}\noalign{\vskip 0.6ex}
\rowcolor{gray!10}
Ours & 75.0 & 62.5 & 60.9 & \textbf{66.1} && 83.2 & 54.7 & \textbf{68.9} \\
\bottomrule[0.4ex]
\end{tabular}
}
\vspace{-2em}
\end{wraptable}

We ablated \name to see how each component contributes to the final performance. 
We report single editing on Qwen2.5 with LoRA in Table~\ref{tab:ablation}, using a randomly selected subset.

According to the table, removing both components (``w/o both'') reduces \name to standard OPSD, which fell below even base LoRA on MQuAKE-uns (49.4 vs. 52.6 Avg.), echoing the coverage failure identified in Section~\ref{sec:method}.
Adding either component substantially improved OPSD, with the hybrid-policy (HP) rollout alone (``w/o NLL'') yielding a higher average than the NLL anchor alone (``w/o HP'') on both benchmarks. 
More importantly, with the same NLL anchor, replacing the hybrid rollout with the on-policy rollout lowered Jnt. by 2.5 points and Cmp. by 3.4 points, and this gap further widened under continual editing (to 6.3 and 6.0 points at $T=10$, see Appendix~\ref{app:ablate}).
The NLL anchor provided complementary grounding: adding it to HP improved MQuAKE-uns by 1.8 points on average, where composition depends on retaining every individually edited fact, while reducing UnKEBench by 1.0 point through lower Dmp. 
Additional sensitivity analysis on the gates $\tau$ and $\kappa$ are in  Appendix~\ref{app:sens}, where both gates exhibit notable robustness.

\begin{wrapfigure}{r}{0.35\textwidth}
\centering
\vspace{-1em}
\includegraphics[width=\linewidth]{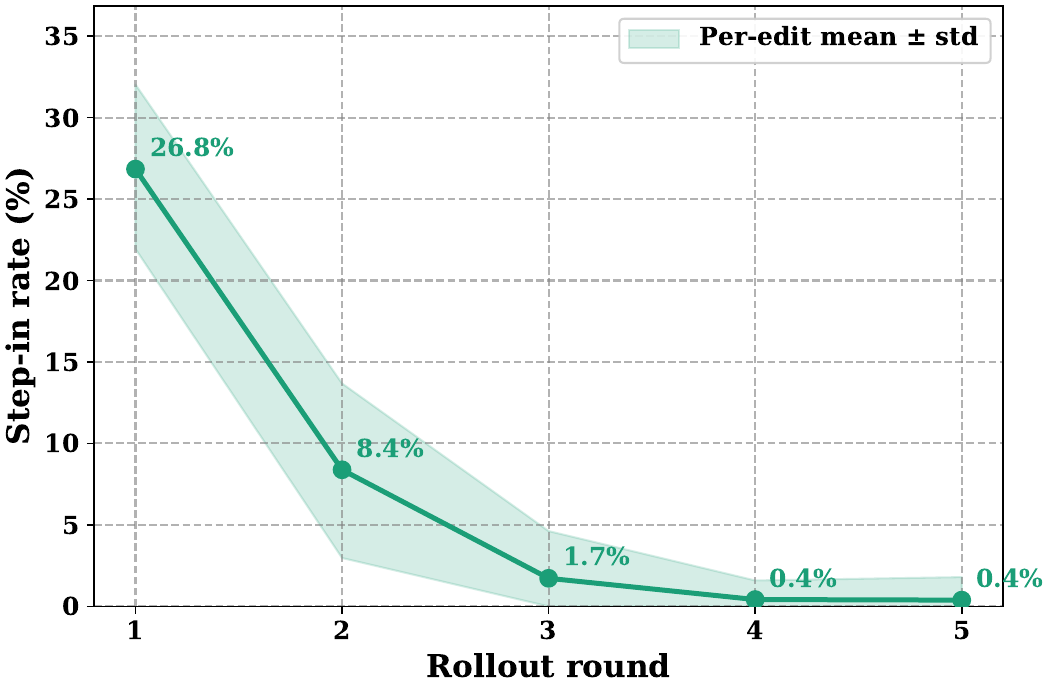}
\vspace{-1em}
\caption{{Privileged step-in dynamics} (UnKEBench, Qwen2.5). The student stops
relying on the step-in within a few rounds.}
\vspace{-2em}
\label{fig:stepin}
\end{wrapfigure}
We further examined how the privileged information enters \name's training over rounds.
To this end, we tracked the step-in frequency per sampling turn.
Results are shown in Figure~\ref{fig:stepin}.
The initial step-in rate of 26.8\% at the first outer round reflects a relatively frequent intervention, showing that the student's own trajectory initially failed to cover many facts.
More importantly, the sharp drop in the step-in rate (26.8\% to 1.7\% within 3 rounds) indicates that \name quickly internalizes the injected knowledge into the student, thereby reducing its reliance on privileged intervention as editing proceeds.
This aligns well with our analysis in Section~\ref{sec:method}.

\subsection{Case Studies}
\label{sec:exp_cases}

\begin{figure}[t]
\begin{casebox}{Composition (MQuAKE-uns) \textemdash\ Qwen2.5}
\casepassage[Injected edits]{Louis XV's religion $\rightarrow$ \emph{Christian Science};\quad
founder of Christian Science $\rightarrow$ \emph{Ibn Saud}.}
\caseline
\caseqa{\textbf{Multi-hop question:} \emph{What is the name of the person who formed the religion
that Louis XV of France practiced?}\quad \textbf{Gold:} \emph{Ibn Saud}.}
\method{\Hpd}{\name}{\cmark}{\dots Louis XV \dots \Hpd{Christian Science}\dots founded by Abdulaziz ibn Abdul Rahman Al Saud, \dots known as \Hpd{Ibn Saud}\dots}
\method{\Base}{LoRA}{\xmark}{\dots associated with Christian Science\dots The founder is generally
recognized as the \Base{Parisian mystic Madame de Valois}\dots}
\method{\Base}{COIN$^{\star}$}{\xmark}{\Base{Louis XIV}, the King of France, was a significant figure in the
patronage of the \Base{Catholic Church}\dots}
\method{\Base}{MEMIT}{\xmark}{Louis XV followed \Base{Roman Catholicism}\dots its founder is
traditionally attributed to \Base{Saint Peter}\dots}
\method{\Base}{AlphaEdit}{\xmark}{Louis XV adhered to the \Base{Catholic Church}, established under
the auspices of \Base{Saint Louis IX of France}\dots}
\method{\Base}{UnKE}{\xmark}{The religion followed by Louis XV was \Base{Roman Catholicism}\dots a
prominent figure within the Catholic Church\dots}
\method{\Base}{AnyEdit}{\xmark}{Louis XV was a prominent figure in \Base{French Catholicism}\dots the
\Base{Jansenist movement}\dots}
\end{casebox}
\caption{
Composition cases with error highlighting. 
Baselines hallucinated or reverted to prior knowledge. 
}
\label{fig:case_mquake}
\end{figure}

\begin{figure}[t]
\begin{casebox}{Decomposition (UnKEBench) \textemdash\ Llama }
\casepassage{\itshape Elvin Penner is a well-known songwriter in the music industry. He has written
several hit songs for popular artists such as Taylor Swift and Ed Sheeran. In an interview with
Rolling Stone magazine, Penner discussed his creative process and how he draws inspiration from his
personal experiences. He also mentioned that he has been writing songs since he was a teenager\dots
including a Grammy for Song of the Year.}
\caseline
\caseqa{\textbf{Sub-question:} \emph{Which popular artists has Elvin Penner written hit songs for?}
\quad \textbf{Gold:} Taylor Swift and Ed Sheeran.}
\method{\Hpd}{\name}{\cmark}{Elvin Penner has written hit songs for popular artists such as
\Hpd{Taylor Swift and Ed Sheeran}.}
\method{\Base}{LoRA}{\xmark}{{Elvin Penner is a well-known songwriter in the music industry. He
has written several hit songs for} Taylor Swift and Ed Sheeran. {In an interview with Rolling
Stone magazine, Penner discussed his creative process}\dots \Base{(passage verbatim)}}
\method{\Base}{COIN$^{\star}$}{\xmark}{{Elvin Penner is a well-known songwriter in the music industry. He
has written}\dots \Base{(passage verbatim)}}
\method{\Base}{AnyEdit}{\xmark}{{Elvin Penner is a well-known songwriter in the music industry.
He has written}\dots \Base{(passage verbatim)}}
\method{\Base}{AlphaEdit}{\xmark}{\dots hit songs for popular artists like \Base{Ariana Grande} and
Taylor Swift\dots since he was \Base{15}\dots}
\method{\Base}{UnKE}{\xmark}{Elvin Penner is a well-known musician in \Base{Jamaica}. He has written
hit songs for \Base{Sean Paul and Shaggy}\dots}
\method{\Base}{MEMIT}{\xmark}{\Base{I'm not aware of any information about} an artist named Elvin
Penner writing hit songs\dots}
\end{casebox}
\caption{
Decomposition cases with error highlighting. 
Baselines repeated, hallucinated, or reverted to prior knowledge.
}
\label{fig:case_unke}
\end{figure}

We conclude this section with case studies on how \name succeeded at composable KE whereas baselines failed.
Due to the page limit, we defer further examples to Appendix~\ref{app:cases}. 

Figure~\ref{fig:case_mquake} shows a \textbf{composition} case: two edits injected via separate \emph{untargeted} passages that must be combined to answer a \emph{targeted} two-hop question.
Most baselines failed to even recall the first hop. 
Base LoRA passed this step, but failed to compose and hallucinated.
With \name, it succeeded in chaining the two.

Figure~\ref{fig:case_unke} shows a \textbf{decomposition} case, where a passage encoding five atomic facts is injected and then probed with a specific atomic question.
The baselines failed in three different ways, either repeating the passage \emph{verbatim} (indicated by low Div. as exemplified in Table~\ref{tab:main}), hallucinating, or falling back to the pre-edited knowledge.
In contrast, \name decomposed the edited knowledge and answered the sub-question directly. 
These cases demonstrate how \name injects new knowledge that is at once composable and decomposable.

\section{Related Works}
\label{sec:related_work}

Existing KE methods broadly fall into two storage paradigms \citep{zhang2024comprehensive,wang2023knowledge}.

\textbf{Internal storage KE} writes the new knowledge directly into the model parameters.
The {locate-then-edit} (LTE) paradigm~\citep{meng2022locating, meng2022mass,li2024pmet,gupta2024unified,hu2024wilke,gu2024model,fang2025alphaedit,ma2025perturbation} first localizes the weights responsible for a fact and then applies a targeted update to them.
Alternatively, PEFT methods such as LoRA \citep{hu2022lora, wang2024roselora} and ReFT \citep{wu2024reft,liu2025unlocking} edit competitively by training a small set of additional parameters.
Recent efforts extend these ideas to unstructured KE (UKE, \citet{wu2024akew}), where the edit is a document-level passage rather than a curated triple. 
To this end, \citet{jiang2025anyedit,su2025muke, zhou2026context} chunk the document and then edit them one by one~\citep{jiang2025anyedit};
and \citet{deng2024unke} edit a larger set of model parameters for better capacities.
Across these lines, the common focus is on isolating a small, knowledge-relevant subset of weights to update.
Yet they primarily supervise the edit passively with a single fixed statement, which induces severe overfitting~\citep{zhang2024uncovering,ma2024neighboring,qi2024ice,lampinen2025generalization,liu2025heto,zhou2026context} and undermines composability of UKE.
In contrast, \name provides a data-centric alternative: it proactively constructs its own training signal rather than passively fitting a fixed statement.
\name makes no assumption about which parameters are updated, complementing these editors and integrating with them seamlessly while preserving their locality.

\textbf{External storage KE}
instead holds the edit in an auxiliary memory, leaving the base parameters frozen, via meta-learning~\citep{mitchell2021fast,tan2024massive,zhang2024instructedit,li2025reinforced},
retrieval-augmented generation~\citep{zhang2024instructedit,jiang2024learning,wei2024stable,wang2024deepedit,chen2025lifelong},
and routing over separately learned weights or module copies~\citep{de2021editing,mitchell2022memory,hartvigsen2024aging,wang2024wise,yu2024melo}.
Recent works extend these ideas to UKE by augmenting the editing with atomic facts~\citep{wang2026fable}. 
However, these methods rely on large, often \textit{hard-to-access} datasets to build the retrieval store or to train the auxiliary models, which limits their applicability~\citep{wang2024wise}.
By contrast, \name is augmentation-free: it improves composability without any external data or auxiliary model, offering a self-contained and affordable path to practical KE.

\textbf{On-policy Distillation} and its self-distillation variant (OPSD) have emerged as an efficient and generalizable paradigm for LLM post-training~\citep{agarwal2024gkd,gu2024minillm,lu2025opd,song2026opd,li2026opd,zhao2026opsd}. By supervising the student on its own rollouts, OP(S)D receives dense, token-level feedback from the teacher~\citep{shenfeld2026sd,xu2026beyond}, with follow-ups strengthening the reliability of the teacher's signal~\citep{ye2026opcd,jin2026entropyaware,yu2026dopd}. Yet its gains lie mainly in reshaping behaviors the model can already produce, rather than installing genuinely new knowledge~\citep{gandhi2025cognitive,he2025rewarding,yue2026does}. Several recent works tackle related coverage issues: HDPO~\citep{ding2026hdpo} replaces entire rollouts on unsolvable RL prompts, filtered by verifiable rewards; CODE~\citep{li2026code} brings OPSD to KE, but conditions its privileged teacher on causal narratives synthesized by an external frontier model and focuses on structured KE; SKD~\citep{xu2025speculative} interleaves teacher tokens into student sampling for general-purpose distillation, using an external teacher to replace student-proposed tokens that fall outside the teacher's top-$K$ set. In contrast, \name performs token-level gated step-in using a privileged state of the model itself, thereby targeting composability in untargeted UKE without a verifier, an external teacher, or external synthesis.

\section{Conclusion}
\label{sec:conclusion}

In this work, we study \emph{composability} in unstructured knowledge editing (UKE), the requirement that an injected edit be both decomposed into its individual facts and composed into multi-hop reasoning.
Formulating UKE in an untargeted regime that mirrors practical editing requests, we recast two benchmarks into composability probes and show that existing editors, which passively rely on a fixed passage as their sole learning source, largely fall short of this requirement. In response, we propose \name, which casts editing as proactive self-distillation from a privileged in-context state of the same model and repairs the coverage gap of pure on-policy distillation through a hybrid rollout that places missing facts onto the student's own trajectory.
\name provides a plug-and-play improvement for existing gradient-based KE editors, equipping them to inject knowledge that is both decomposable and composable. Extensive experiments demonstrate the effectiveness of our method.
In the future, we plan to generalize \name toward lifelong and multimodal editing, enabling it to encode accumulated edits rather than a single passage, and to operate beyond the text-token space, thereby paving the way toward more practical UKE.

\bibliography{main}

\clearpage
\appendix
\section{Omitted Theoretical Analysis}
\label{app:theory}

In this section we develop the analysis behind Theorem~\ref{thm:main} that was omitted in the main body due to page limits. We first introduce the notation used in the analysis, then present the main theoretical results and properties of the method, and finally connect \name to on-policy imitation learning.

\subsection{Notations}
\label{app:theory:setup}

For completeness, we first collect the notation that will be used in our analysis.

For discrete distributions $p,q$ over the vocabulary $\mathcal V$, the (forward) Kullback--Leibler divergence is
\begin{equation*}
\kl[p\|q]
\;\triangleq\;
\sum_{v\in\mathcal V}p(v)\log\frac{p(v)}{q(v)}.
\end{equation*}
This is the divergence appearing in the distillation loss of Eq.~\eqref{eq:opsd}. For scalars $a,b\in(0,1)$, the binary KL between the Bernoulli distributions $(a,1-a)$ and $(b,1-b)$ is
\begin{equation*}
d_{\mathrm{bin}}(a\|b)
\;\triangleq\;
a\log\frac{a}{b}
+(1-a)\log\frac{1-a}{1-b}.
\end{equation*}
We write $\arg\max_{v}p(v)$ for the mode of $p$, $|\mathcal V|$ for the vocabulary size, and $D_{\max}$ for a uniform upper bound on the per-token KL (Assumption~\ref{as:reg}).

Recall from Section~\ref{sec:method} the student $\pi_\theta$ and the privileged teacher
\begin{equation*}
\pi^\star(\cdot\mid\xv,\yv_{<t})
\;\triangleq\;
\pi_0(\cdot\mid\cv,\xv,\yv_{<t}).
\end{equation*}
At step $t$, define the teacher's greedy token by
\begin{equation*}
y^\star_t
\;\triangleq\;
\arg\max_{v\in\mathcal V}
\pi^\star(v\mid\xv,\yv_{<t}).
\end{equation*}
The hybrid rollout policy $\pi_\rho$ of Eq.~\eqref{eq:hybrid} emits $y^\star_t$ when both conditions of the step-in gate in Eq.~\eqref{eq:stepin} hold:
\begin{equation*}
\log\pi^\star(y^\star_t\mid\cdot)
-\log\pi_\theta(y^\star_t\mid\cdot)>\tau,
\qquad
\pi^\star(y^\star_t\mid\cdot)>\kappa.
\end{equation*}
Otherwise, it emits the student's token.

We call a run of $\ell$ consecutive positions carrying the injected knowledge a \emph{fact span}, and write $F_j$ ($j=0,\dots,\ell-1$) for the prefix whose next token is the $(j{+}1)$-th fact token $y^\star_{j+1}$, with $F=\{F_0,\dots,F_{\ell-1}\}$. The \emph{deep coverage} and \emph{fact-signal} of a rollout policy $\mu$ (introduced informally in Theorem~\ref{thm:main}) are
\begin{align}
\label{eq:coverage}
c_j(\mu) &\triangleq \Pr\nolimits_{\yv\sim\mu}\big[\,\yv\text{ reproduces }y^\star_1,\dots,y^\star_j\,\big],\\
\label{eq:factsignal}
S_\mu &\triangleq \E_{\yv\sim\mu}\Big[\,\sum\nolimits_{t:\,\yv_{<t}\in F}\kl\big[\pi^\star(\cdot\mid\xv,\yv_{<t})\,\big\|\,\pi_\theta(\cdot\mid\xv,\yv_{<t})\big]\Big].
\end{align}
Since $F_j$ is visited exactly when the first $j$ fact tokens are reproduced---an event of probability $c_j(\mu)$---the fact-signal decomposes as
\begin{align}
\label{eq:signal-decomp}
S_\mu \;=\; \sum_{j=0}^{\ell-1} c_j(\mu)\,\kl\big[\pi^\star_{F_j}\,\big\|\,\pi_{\theta,F_j}\big],
\qquad \pi^\star_{F_j}\triangleq\pi^\star(\cdot\mid\xv,F_j),\quad \pi_{\theta,F_j}\triangleq\pi_\theta(\cdot\mid\xv,F_j).
\end{align}

\subsection{Main Result: Signal Separation}
\label{app:theory:main}

Given the notation introduced above, we now present our main theoretical result. Our analysis is built upon the following three assumptions.

\begin{assumption}[Novel edit]\label{as:span}
The edit introduces a length-$\ell$ run of new-knowledge tokens $y^\star_1,\dots,y^\star_\ell$ that the pre-edit student lacks but the in-context teacher supplies. At every fact prefix $F_j$,
\begin{equation*}
\pi_\theta(y^\star_{j+1}\mid F_j)\le\rho,
\qquad
\pi^\star(y^\star_{j+1}\mid F_j) > \kappa.
\end{equation*}
Moreover, the edit is \emph{$\tau$-detectable}:
\begin{equation*}
\rho<\kappa e^{-\tau}.
\end{equation*}
\end{assumption}

\begin{remark}
These conditions describe a non-trivial edit. The student cannot yet produce the new tokens, while access to the passage makes them a confident continuation for the teacher. The teacher-side condition is the standard in-context learning premise~\citep{xie2022bayesian,lampinen2025generalization}. Together, the conditions formalize the coverage failure studied in Section~\ref{sec:problem} and Figure~\ref{fig:opsd_fail}. Step~1 derives both the gate activation and the subsequent coverage collapse. The span length $\ell$ counts only the novel tokens; any part of the fact that the student already knows is excluded.
\end{remark}

\begin{assumption}[Stationary student]\label{as:sg}
The analysis holds $\pi_\theta$ fixed during each rollout. This matches the stop-gradient inner loop of Algorithm~\ref{alg:hpse}, which draws one rollout per round and treats it as a fixed dataset for its $M$ updates. For the sampling-cost statement (will be seen in Theorem~\ref{thm:rate} later), the compared rollouts are also drawn i.i.d.
\end{assumption}

\begin{remark}
This is the pre-acquisition phase of the on-policy distillation procedure~\citep{agarwal2024gkd}. Within a batch, both $\pi_\theta$ and the gate remain fixed, so $c_j$ is a well-defined success probability for each rollout.
\end{remark}

\begin{assumption}[Regularity]\label{as:reg}
The per-token KL is uniformly bounded:
\begin{equation*}
\kl[\pi^\star_t\|\pi_{\theta,t}]
\le D_{\max}<\infty.
\end{equation*}
\end{assumption}

\begin{remark}
The boundedness condition is mild. It holds, for example, when $\pi_\theta$ has a probability floor over the teacher's support. It is needed only for upper bounds on the OPSD signal; the hybrid lower bound does not use it.
\end{remark}

We first prove Theorem~\ref{thm:main} for student rollouts sampled at temperature~$1$. Remark~\ref{rem:greedy} later extends the coverage argument to greedy decoding, and Theorem~\ref{thm:rate} examines the per-round signal and sampling cost under both decoding regimes. Throughout this section, $S_\mu$ refers only to the hybrid-KL term. The NLL anchor $\lambda\J_{\mathrm{NLL}}$ in Eq.~\eqref{eq:ours} is not included in the signal comparison. Corollary~\ref{cor:anchor} below verifies that including the anchor on both sides leaves the comparison unchanged.

We now present the formal version of Theorem~\ref{thm:main} using the notation of Appendix~\ref{app:theory:setup}.

\begin{theorem}[Signal separation; formal version of Theorem~\ref{thm:main}]
\label{thm:main-formal}
Under Assumptions~\ref{as:span}--\ref{as:reg}, consider a fact span of length $\ell$ and a student rollout sampled at temperature~$1$. Define
\begin{equation*}
d(\kappa,\tau)
\triangleq
\kappa\tau
+(1-\kappa)\log\frac{1-\kappa}{1-\kappa e^{-\tau}}
>0.
\end{equation*}
Then the student and hybrid rollout policies satisfy:
\begin{enumerate}[leftmargin=1.7em,itemsep=2pt,topsep=2pt]
\item \emph{(Coverage.)} At every depth $j\le\ell$,
\begin{equation*}
c_j(\pi_\theta)\le e^{-\tau j},
\qquad
c_j(\pi_\rho)=1,
\qquad
\frac{c_j(\pi_\rho)}{c_j(\pi_\theta)}\ge e^{\tau j}.
\end{equation*}
\item \emph{(Signal.)} The fact-signals satisfy
\begin{equation*}
S_{\pi_\theta}
\le \frac{D_{\max}}{1-e^{-\tau}}
=\Theta(1),
\qquad
S_{\pi_\rho}
\ge \ell\,d(\kappa,\tau)
=\Theta(\ell),
\end{equation*}
and consequently
\begin{equation*}
\frac{S_{\pi_\rho}}{S_{\pi_\theta}}=\Omega(\ell).
\end{equation*}
\end{enumerate}
\end{theorem}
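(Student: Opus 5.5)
The plan is to prove the coverage claims first by a direct product argument over the fact prefixes, then feed them into the decomposition Eq.~\eqref{eq:signal-decomp} to get both signal bounds. Throughout I assume $\tau>0$. I also use that $y^\star_{j+1}$ is the teacher's mode at $F_j$, which is how the fact tokens are defined via $y^\star_t$.

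\textbf{Coverage.} With $\pi_\theta$ frozen (Assumption~\ref{as:sg}), a temperature-$1$ student rollout reproduces $y^\star_1,\dots,y^\star_j$ with probability $\prod_{i=0}^{j-1}\pi_\theta(y^\star_{i+1}\mid F_i)\le\rho^j$ by Assumption~\ref{as:span}. Since $\rho<\kappa e^{-\tau}<e^{-\tau}$, this gives $c_j(\pi_\theta)\le e^{-\tau j}$.

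For the hybrid policy I argue by induction on $j$. Suppose the rollout is at $F_j$. Then $\pi^\star(y^\star_{j+1}\mid F_j)>\kappa$, and
\[
\log\pi^\star(y^\star_{j+1}\mid F_j)-\log\pi_\theta(y^\star_{j+1}\mid F_j)>\log\kappa-\log\rho>\tau,
\]
so both gates of Eq.~\eqref{eq:stepin} fire. The step-in then emits $y^\star_{j+1}$ deterministically, and the rollout moves to $F_{j+1}$. Starting from $F_0$, this gives $c_j(\pi_\rho)=1$, and the coverage ratio bound follows immediately.

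\textbf{OPSD signal.} Bound each KL in Eq.~\eqref{eq:signal-decomp} by $D_{\max}$ (Assumption~\ref{as:reg}). Then sum the geometric series:
\[
S_{\pi_\theta}\le D_{\max}\sum_{j\ge0}e^{-\tau j}=\frac{D_{\max}}{1-e^{-\tau}}.
\]

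\textbf{Hybrid signal.} All $c_j(\pi_\rho)=1$, so it suffices to lower bound each per-prefix KL by $d(\kappa,\tau)$. I apply the data-processing inequality to the coarsening $v\mapsto\mathbf 1[v=y^\star_{j+1}]$. This gives
\[
\kl[\pi^\star_{F_j}\|\pi_{\theta,F_j}]\ge d_{\mathrm{bin}}(a\|b),\qquad a=\pi^\star(y^\star_{j+1}\mid F_j)>\kappa,\quad b=\pi_\theta(y^\star_{j+1}\mid F_j)\le\rho<\kappa e^{-\tau}.
\]
Then I use the standard monotonicity of $d_{\mathrm{bin}}$. We have $\partial_a d_{\mathrm{bin}}(a\|b)=\log\frac{a(1-b)}{b(1-a)}\ge0$ for $a\ge b$, and $\partial_b d_{\mathrm{bin}}(a\|b)=\frac{b-a}{b(1-b)}\le0$ for $b\le a$. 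Together these give
\[
d_{\mathrm{bin}}(a\|b)\ge d_{\mathrm{bin}}(\kappa\|b)\ge d_{\mathrm{bin}}(\kappa\|\kappa e^{-\tau}).
\]
A direct expansion shows $d_{\mathrm{bin}}(\kappa\|\kappa e^{-\tau})$ equals $d(\kappa,\tau)$, since its first term is $\kappa\log(\kappa/\kappa e^{-\tau})=\kappa\tau$. It is strictly positive because $\kappa\ne\kappa e^{-\tau}$. Summing over the $\ell$ prefixes gives $S_{\pi_\rho}\ge\ell\,d(\kappa,\tau)$. Dividing this by the OPSD bound gives the $\Omega(\ell)$ ratio, with a constant depending only on $(\kappa,\tau,D_{\max})$.

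\textbf{Main obstacle.} The delicate step is the hybrid lower bound. It requires two things. First, the chain of monotonicity moves must keep the order constraints $b\le\kappa e^{-\tau}<\kappa\le a$ at every step; in particular the first move, lowering $a$ to $\kappa$, must keep $a\ge b$, which holds because $b<\kappa$. Second, the hybrid rollout must actually sit at $F_j$. That is exactly what the inductive gate argument supplies, and it is where $\tau$-detectability is used.
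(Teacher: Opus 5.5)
Your proof is correct. For coverage and the OPSD bound it follows the paper's argument. The paper writes the student-side bound as a supermartingale ($\rho^{-t}$ times the indicator of $A_t$) with optional stopping, but notes this is equivalent to iterating the one-step recursion, which is your product $\prod_i\pi_\theta(y^\star_{i+1}\mid F_i)\le\rho^j$. The gate-firing induction for the hybrid is also the same.

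You genuinely differ in the per-prefix lower bound $\kl[\pi^\star_{F_j}\|\pi_{\theta,F_j}]\ge d(\kappa,\tau)$. After the same data-processing coarsening, the paper first pushes $b$ up to $e^{-\tau}a$, producing the one-variable function $g(a)=d_{\mathrm{bin}}(a\|e^{-\tau}a)$. It then shows $g$ is nondecreasing through a derivative computation that needs $\log x\ge(x-1)/x$ and an algebraic identity, and only then evaluates at $a=\kappa$. You reverse the order of the two reductions. Because $b<\kappa$, you first lower $a$ to $\kappa$ using $\partial_a d_{\mathrm{bin}}(a\|b)=\log\frac{a(1-b)}{b(1-a)}\ge0$. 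You then raise $b$ to $\kappa e^{-\tau}$ using $\partial_b d_{\mathrm{bin}}(\kappa\|b)=\frac{b-\kappa}{b(1-b)}\le0$. Both signs are immediate, so the $g'$ analysis disappears, and strict positivity follows from Gibbs' inequality since $\kappa\ne\kappa e^{-\tau}$.

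In exchange, the paper's route gives an explicit quantitative floor $d(\kappa,\tau)\ge\kappa(\tau-1+e^{-\tau})$. Its monotonicity of $g$ also directly shows that $d$ increases in $\kappa$, as asserted in Remark~\ref{rem:signal-interpretation}. Neither is needed for the theorem.

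Your per-prefix bound also closes the unproved sides of the two $\Theta$ claims, which both you and the paper leave implicit. Since $c_0(\pi_\theta)=1$, you get $S_{\pi_\theta}\ge\kl[\pi^\star_{F_0}\|\pi_{\theta,F_0}]\ge d(\kappa,\tau)$. Assumption~\ref{as:reg} gives $S_{\pi_\rho}\le\ell D_{\max}$.
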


We first establish the depth-wise coverage separation.

\begin{proof}[Proof of coverage]
Let $(\mathcal F_t)_{t\ge0}$ be the filtration generated by the rollout, and define
\begin{equation*}
A_t\triangleq\{\yv_{1:t}=y^\star_{1:t}\},
\qquad
c_t(\mu)=\Pr_\mu[A_t].
\end{equation*}
Thus $A_t$ is the event that the rollout has reproduced the fact span through depth $t$. We condition throughout on the rollout reaching the span entrance $F_0$, so $c_0=1$ for both policies, and analyze coverage \emph{within} the span.

On $A_{t-1}$, the next token equals $y^\star_t$ with conditional probability
\begin{equation*}
\pi_\theta(y^\star_t\mid F_{t-1})\le\rho
\end{equation*}
by novelty (Assumption~\ref{as:span}); off $A_{t-1}$, the event $A_t$ is impossible. Novelty also \emph{derives} the gate:
\begin{equation*}
\log\pi^\star(y^\star_t\mid F_{t-1})
-\log\pi_\theta(y^\star_t\mid F_{t-1})
\ge\log\frac{\kappa}{\rho}
>\tau,
\qquad
\pi^\star(y^\star_t\mid F_{t-1})\ge\kappa.
\end{equation*}
Hence the step-in fires at each fact token---a consequence, not an assumption. For the student ($\mu=\pi_\theta$),
\begin{equation}
\label{eq:supermg}
\E\!\big[\mathbf 1_{A_t}\mid\mathcal F_{t-1}\big]=\mathbf 1_{A_{t-1}}\,\pi_\theta(y^\star_t\mid F_{t-1})\le \rho\,\mathbf 1_{A_{t-1}}.
\end{equation}
Define
\begin{equation*}
M_t\triangleq\rho^{-t}\mathbf 1_{A_t}.
\end{equation*}
Then $M_t$ is a nonnegative supermartingale. Optional stopping (equivalently, iterating the recursion in Eq.~\eqref{eq:supermg}) gives $\E[M_j]\le M_0=1$, i.e.
\begin{equation}
\label{eq:cov-collapse}
c_j(\pi_\theta)=\E[\mathbf 1_{A_j}]\le \rho^{\,j}\le e^{-\tau j}\qquad\text{for all }j\le\ell ,
\end{equation}
the last inequality following from $\rho<\kappa e^{-\tau}\le e^{-\tau}$.

Equivalently, define the first-deviation time and reproduction depth by
\begin{equation*}
T\triangleq\inf\{t\ge1:\mathbf 1_{A_t}=0\},
\qquad
D\triangleq T-1.
\end{equation*}
Here $T$ is a stopping time because $\{T\le t\}=A_t^{c}\in\mathcal F_t$. The reproduction depth satisfies
\begin{equation*}
\Pr[D\ge j]\le\rho^{\,j},
\qquad
\E[D]\le\frac{\rho}{1-\rho}.
\end{equation*}
Thus the student's own rollout leaves the fact path within a constant number of tokens in expectation.

For the hybrid ($\mu=\pi_\rho$), the gate fires at every $F_j$ as derived above. Under greedy step-in, the teacher token $y^\star_{j+1}$ is injected deterministically and $A_\ell$ holds surely. Therefore,
\begin{equation*}
c_j(\pi_\rho)=1,
\qquad
\frac{c_j(\pi_\rho)}{c_j(\pi_\theta)}\ge e^{\tau j}.
\end{equation*}
This completes our proof.
\end{proof}

We next convert the coverage separation into a fact-signal separation.

\begin{proof}[Proof of signal]
For
\begin{equation*}
\mu\in\{\pi_\theta,\pi_\rho\},
\end{equation*}
Eq.~\eqref{eq:signal-decomp} gives
\begin{equation*}
S_\mu
=\sum_{j=0}^{\ell-1}c_j(\mu)\,\mathrm{KL}_j,
\qquad
\mathrm{KL}_j
\triangleq
\kl[\pi^\star_{F_j}\|\pi_{\theta,F_j}].
\end{equation*}
The per-prefix term $\mathrm{KL}_j$ is the same under both rollout policies, so the comparison is determined by the coverage weights $c_j(\mu)$.

Assumption~\ref{as:reg} gives $\mathrm{KL}_j\le D_{\max}$. Combining this bound with the coverage result in Eq.~\eqref{eq:cov-collapse}, we obtain
\begin{align*}
S_{\pi_\theta}
&\le D_{\max}\sum_{j=0}^{\ell-1}e^{-\tau j}
\le\frac{D_{\max}}{1-e^{-\tau}}
=\Theta(1).
\end{align*}
This bound is independent of $\ell$: the geometric coverage suppresses every deep term and the student's signal is front-loaded on the shallowest tokens. For the hybrid,
\begin{equation*}
c_j(\pi_\rho)=1
\qquad\Longrightarrow\qquad
S_{\pi_\rho}=\sum_{j=0}^{\ell-1}\mathrm{KL}_j,
\end{equation*}
so it remains to lower-bound each $\mathrm{KL}_j$.

Fix $F_j$ and set
\begin{equation*}
a\triangleq\pi^\star(y^\star_{j+1}\mid F_j),
\qquad
b\triangleq\pi_\theta(y^\star_{j+1}\mid F_j).
\end{equation*}
Novelty gives
\begin{equation*}
a\ge\kappa,
\qquad
b\le\rho<\kappa e^{-\tau}\le e^{-\tau}a.
\end{equation*}
Coarsening the vocabulary to the binary event $\{y=y^\star_{j+1}\}$, the data-processing inequality~\citep{cover1999elements} gives
\begin{equation*}
\mathrm{KL}_j\ge d_{\mathrm{bin}}(a\|b).
\end{equation*}
For $b<a$,
\begin{align*}
\partial_b d_{\mathrm{bin}}(a\|b)
&=-\frac{a}{b}+\frac{1-a}{1-b}<0,\\
d_{\mathrm{bin}}(a\|b)
&\ge d_{\mathrm{bin}}(a\|e^{-\tau}a)
\eqqcolon g(a),\\
g(a)
&=a\tau+(1-a)\log\frac{1-a}{1-e^{-\tau}a}.
\end{align*}
Differentiating and using $\log x\ge(x-1)/x$,
\begin{align*}
g'(a)
&=\tau-1+\log\frac{1-e^{-\tau}a}{1-a} \\
&\quad+\frac{(1-a)e^{-\tau}}{1-e^{-\tau}a} \\
&\ge\tau-1
+\frac{a(1-e^{-\tau})+(1-a)e^{-\tau}}{1-e^{-\tau}a} \\
&=\tau-1+\frac{a+e^{-\tau}-2ae^{-\tau}}{1-e^{-\tau}a}.
\end{align*}
Moreover,
\begin{equation*}
\frac{a+e^{-\tau}-2ae^{-\tau}}{1-e^{-\tau}a}-e^{-\tau}
=\frac{a(1-e^{-\tau})^2}{1-e^{-\tau}a}
\ge0.
\end{equation*}
Thus $g'(a)\ge\tau-1+e^{-\tau}\ge0$ for $\tau>0$; the last quantity vanishes only at $\tau=0$. Hence $g$ is nondecreasing on $(0,1)$ and
\begin{equation*}
\mathrm{KL}_j\ge g(\kappa)=d(\kappa,\tau).
\end{equation*}
Using
\begin{equation*}
\log\frac{1-\kappa}{1-\kappa e^{-\tau}}
\ge-\frac{\kappa(1-e^{-\tau})}{1-\kappa},
\end{equation*}
we further have
\begin{align*}
d(\kappa,\tau)
&=\kappa\tau+(1-\kappa)\log\frac{1-\kappa}{1-\kappa e^{-\tau}}\\
&\ge\kappa(\tau-1+e^{-\tau})>0.
\end{align*}
Summing over the span yields
\begin{equation*}
S_{\pi_\rho}\ge\ell\,d(\kappa,\tau)=\Theta(\ell),
\qquad
\frac{S_{\pi_\rho}}{S_{\pi_\theta}}
\ge
\frac{\ell\,d(\kappa,\tau)(1-e^{-\tau})}{D_{\max}}
=\Omega(\ell).
\end{equation*}
This completes our proof.
\end{proof}

\begin{remark}[Interpreting the signal separation]\label{rem:signal-interpretation}
We have two observations about our signal-separation result.

\emph{Role of the confidence gate.}
The quantity $d(\kappa,\tau)$ increases in both $\kappa$ and $\tau$, and the confidence gate is essential for a non-vanishing per-token signal. With condition~(i) alone, one may send $a,b\to0$ at a fixed ratio $a/b>e^{\tau}$, in which case
\begin{equation*}
d_{\mathrm{bin}}(a\|b)\to0.
\end{equation*}
The floor $a\ge\kappa$ rules out this degeneration and keeps the per-token signal bounded away from zero.

\emph{Asymptotic scope.}
The separation is asymptotic in $\ell$, with $\tau$, $\kappa$, and $D_{\max}$ held fixed. In particular, the OPSD bound
\begin{equation*}
\frac{D_{\max}}{1-e^{-\tau}}
\end{equation*}
itself grows as $\tau\to0$. The result therefore concerns the distinct scaling of the two signals with the fact-span length $\ell$; it is not an absolute claim that $S_{\pi_\theta}$ must be numerically small in every parameter regime.
\end{remark}

Theorem~\ref{thm:main-formal} compares the two rollout policies through the distillation term alone, whereas the full objective in Eq.~\eqref{eq:ours} also carries the NLL anchor $\lambda\J_{\mathrm{NLL}}$. We next verify that including the anchor on both sides does not alter the separation. The key condition is that the anchor supervises a different set of prefixes.

\begin{assumption}[Context separation]\label{as:sep}
The fact prefixes and the passage prefixes differ as token sequences:
\begin{equation*}
F_j\ne\cv_{<t}
\qquad\text{for all }j=0,\dots,\ell-1\text{ and }t=1,\dots,|\cv|.
\end{equation*}
\end{assumption}

\begin{remark}
The condition states that the rollout does not retrace the passage verbatim. It is natural in our untargeted regime: the response to a generic editing prompt follows the chat template and an answer-style phrasing, whereas $\cv$ is document-style text. When the condition fails, i.e., the ideal response reproduces the passage word by word, the anchor does supervise the fact prefixes. This boundary is visible at the metric level in Section~\ref{sec:exp_ablation}: adding the anchor aids joint recall, whose target retraces the passage, while lowering decomposed recall; effects beyond such passage-like responses, e.g., its gain on MQuAKE-uns, operate through parameter sharing rather than through explicit fact-prefix supervision.
\end{remark}

\begin{corollary}[Signal separation with the NLL anchor]\label{cor:anchor}
Under Assumptions~\ref{as:span}--\ref{as:reg} and~\ref{as:sep}, augment both objectives in Theorem~\ref{thm:main-formal} with the anchor, i.e., compare $\J(\theta)+\lambda\J_{\mathrm{NLL}}(\theta)$ of Eq.~\eqref{eq:opsd} against $\J_{\mathrm{\name}}(\theta)$ of Eq.~\eqref{eq:ours}. Then:
\begin{enumerate}[leftmargin=1.7em,itemsep=2pt,topsep=2pt]
\item \emph{(Zero explicit anchor mass on fact prefixes.)} The anchor places no explicit supervision term on any $F_j$. Hence the fact-signals of the anchored objectives coincide with $S_{\pi_\theta}$ and $S_{\pi_\rho}$, and the bounds of Theorem~\ref{thm:main-formal} hold verbatim:
\begin{equation*}
S_{\pi_\theta}\le\frac{D_{\max}}{1-e^{-\tau}}=\Theta(1),
\qquad
S_{\pi_\rho}\ge\ell\,d(\kappa,\tau)=\Theta(\ell).
\end{equation*}
\item \emph{(Explicit constraint at fact prefixes.)} Viewed as a functional of the per-state conditionals, the anchored OPSD objective contains no term at $F_j$ beyond the KL term of weight $c_j(\pi_\theta)\le e^{-\tau j}$, which reduces to the single entrance term ($j=0$) under the greedy hypothesis of Theorem~\ref{thm:rate}. In contrast, any zero-loss minimizer of the anchored hybrid objective satisfies $\pi_\theta(\cdot\mid F_j)=\pi^\star(\cdot\mid F_j)$ at every $F_j$.
\end{enumerate}
\end{corollary}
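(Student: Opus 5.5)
The plan is to treat both anchored objectives as functionals of the per-state student conditionals $\pi_\theta(\cdot\mid\xv,s)$, indexed by the prefix $s$, and to sort their terms by the prefix at which each one supervises. The NLL anchor expands by the chain rule as
\begin{equation*}
-\lambda\log\pi_\theta(\cv\mid\xv)=-\lambda\sum_{t=1}^{|\cv|}\log\pi_\theta(c_t\mid\xv,\cv_{<t}),
\end{equation*}
so it places explicit terms only at the passage prefixes $\cv_{<t}$. By Assumption~\ref{as:sep}, none of these prefixes coincides with any $F_j$. The fact-signal in Eq.~\eqref{eq:factsignal} collects only the terms evaluated at prefixes in $F$, so adding the anchor contributes nothing to it, under either rollout policy. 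The anchored fact-signals are therefore exactly $S_{\pi_\theta}$ and $S_{\pi_\rho}$, and the bounds of Theorem~\ref{thm:main-formal} transfer verbatim. This settles part~1.

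For part~2, I would first write the anchored OPSD objective state by state. At a fact prefix $F_j$, the KL term appears with weight equal to the probability that the rollout visits $F_j$, which is $c_j(\pi_\theta)$, and the anchor adds nothing there by the argument above. The coverage part of Theorem~\ref{thm:main-formal} then gives $c_j(\pi_\theta)\le e^{-\tau j}$. In the greedy case, where the deterministic student rollout leaves the span at its entrance, only the $j=0$ term survives. I would cite this fact as the greedy hypothesis of Theorem~\ref{thm:rate} (equivalently, Remark~\ref{rem:greedy}) rather than re-deriving it.

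For the hybrid side, I would use that under Assumption~\ref{as:span} the gate fires at every fact token, so $c_j(\pi_\rho)=1$ for all $j<\ell$. Both the KL term and the anchor are nonnegative, since the anchor is a negative log-probability. A zero-loss minimizer must therefore make every positively weighted KL term vanish. In particular $\kl[\pi^\star_{F_j}\|\pi_{\theta,F_j}]=0$ at each $F_j$, which gives $\pi_\theta(\cdot\mid F_j)=\pi^\star(\cdot\mid F_j)$.

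The main obstacle is that a zero-loss minimizer must also drive the anchor to zero. This requires $\pi_\theta(\cv\mid\xv)=1$, which is attainable only in the per-state functional view and only if the constraints do not conflict. Assumption~\ref{as:sep} is exactly what rules out a conflict: it keeps the state sets disjoint, so the conditionals can be chosen independently at $F_j$ and at $\cv_{<t}$. I would therefore phrase part~2 as an implication (any zero-loss minimizer, if one exists, satisfies the constraint) and state explicitly that the argument treats the conditionals as free per-state variables, ignoring parameter sharing.
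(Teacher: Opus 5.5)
Your proposal is correct and follows the paper's proof essentially step for step. Part~1 comes from expanding the anchor over the passage prefixes $(\xv,\cv_{<t})$ and invoking Assumption~\ref{as:sep}, and part~2 contrasts the OPSD weight $c_j(\pi_\theta)\le e^{-\tau j}$ at $F_j$ (only $F_0$ under the greedy hypothesis of Theorem~\ref{thm:rate}) with the unit weight the hybrid places on every $F_j$, where the forward KL vanishes only at equality.

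Your nonnegativity argument for the zero-loss claim is a slightly more explicit version of the paper's appeal to Proposition~\ref{prop:consistency}(ii). That implication holds even without Assumption~\ref{as:sep}, which matters only for whether a zero-loss minimizer can exist. Even there, the assumption only separates the $F_j$ from the passage prefixes, not every state the hybrid rollout visits, so your claim that it rules out any conflict is a slight overstatement; it is harmless because you state the result as an implication.
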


\begin{proof}
The anchor decomposes over passage prefixes,
\begin{equation*}
\J_{\mathrm{NLL}}(\theta)
=-\log\pi_\theta(\cv\mid\xv)
=-\sum_{t=1}^{|\cv|}\log\pi_\theta(c_t\mid\xv,\cv_{<t}),
\end{equation*}
so it depends on the conditionals of $\pi_\theta$ only at the states $(\xv,\cv_{<t})$. By Assumption~\ref{as:sep} these states differ from every $F_j$, so the anchor contributes no term to the fact-signal of Eq.~\eqref{eq:factsignal}, which sums only over prefixes in $F$. Claim~(i) then follows from Theorem~\ref{thm:main-formal}.

For claim~(ii), the fact-prefix contribution to the anchored OPSD objective is the KL term weighted by $c_j(\pi_\theta)\le e^{-\tau j}$ from Eq.~\eqref{eq:cov-collapse}, and is bounded by $e^{-\tau j}D_{\max}$ under Assumption~\ref{as:reg}; under the greedy hypothesis of Theorem~\ref{thm:rate}, only $F_0$ is visited and the contribution reduces to the single term there. The hybrid statement repeats the argument of Proposition~\ref{prop:consistency}(ii) with the anchor present: the anchor does not constrain $\pi_\theta(\cdot\mid F_j)$ by Assumption~\ref{as:sep}, while the hybrid distillation term assigns weight one to every $F_j$ and the forward KL vanishes only at equality.
This completes our proof.
\end{proof}

\begin{remark}[Scope of the anchored comparison]
The corollary treats the objectives at the level of visitation, which pins down what each loss enforces per state. In a shared-parameter network the anchor can still move the conditionals at $F_j$ through generalization across states; this is why the anchored OPSD variant remains a non-trivial competitor in Section~\ref{sec:exp_ablation}. What the anchor fits explicitly, however, is the passage-context conditional, whose context-bound nature is reflected in the lower decomposed recall (Dmp.) in Section~\ref{sec:exp_ablation}. The comparison of what the objectives enforce on the fact prefixes is therefore unchanged by the anchor; the corollary concerns this explicit per-state signal only and makes no claim about the anchor's net effect on parameters or downstream performance.
\end{remark}
\begin{remark}[Greedy vs.\ sampled]\label{rem:greedy}
The bound in Eq.~\eqref{eq:cov-collapse} concerns sampled rollouts because the conditional probabilities in Eq.~\eqref{eq:supermg} are sampling probabilities. Under greedy decoding, it instead describes expected coverage over the prompt distribution. If moreover $\tau\ge\log|\mathcal V|$, then
\begin{equation*}
\pi_\theta(y^\star_i\mid\cdot)
<e^{-\tau}
\le\frac{1}{|\mathcal V|}
\le\max_y\pi_\theta(y\mid\cdot).
\end{equation*}
Thus $y^\star_i$ is not the greedy student's argmax, and the student diverges at the first fact token:
\begin{equation*}
c_j(\pi_\theta)=0
\qquad\text{for all }j\ge1.
\end{equation*}
This is an even stronger collapse.
\end{remark}

Theorem~\ref{thm:main} characterizes the signal available within a single training round. Under a student rollout, supervision on deeper fact tokens becomes exponentially rare. We next examine how the decoding regime shapes this per-round signal and the cost of sampling deep fact prefixes.

\begin{theorem}[Effect of the decoding regime]\label{thm:rate}
Beyond the per-round signal of Theorem~\ref{thm:main}:
\begin{enumerate}[leftmargin=1.7em,itemsep=1pt,topsep=2pt]
\item \emph{(Greedy, per round.)} Under greedy decoding, suppose the edit is counterfactual at the span entrance:
\begin{equation*}
\arg\max_v\pi_\theta(v\mid F_0)\ne y^\star_1.
\end{equation*}
Then a student-only (OPSD) rollout diverges at the entrance: it visits $F_0$ but no deeper $F_j$ with $j\ge1$, whereas the hybrid injects the entire span. Their per-round fact-signals satisfy
\begin{equation*}
S_{\pi_\theta}=\kl\big[\pi^\star_{F_0}\,\big\|\,\pi_{\theta,F_0}\big]\le D_{\max}=O(1),
\qquad
S_{\pi_\rho}\ge\ell\,d(\kappa,\tau)=\Omega(\ell).
\end{equation*}
\item \emph{(Sampled, waiting time.)} Under temperature-$1$ sampling, a \emph{fixed} student first reaches depth $j$ only after $\Omega(e^{\tau j})$ draws in expectation (Eq.~\eqref{eq:cov-collapse}).
\end{enumerate}
\end{theorem}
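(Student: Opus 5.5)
For part (i), I will exploit the fact that greedy decoding is deterministic, so the coverage weights $c_j(\mu)$ in Eq.~\eqref{eq:signal-decomp} are indicators.

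\emph{OPSD side.} Conditioned on reaching $F_0$, the student emits $\arg\max_v\pi_\theta(v\mid F_0)$. By the counterfactual hypothesis this token is not $y^\star_1$. Hence $A_1$ fails surely, so $c_0(\pi_\theta)=1$ and $c_j(\pi_\theta)=0$ for all $j\ge1$. Eq.~\eqref{eq:signal-decomp} then collapses to the single entrance term $\kl[\pi^\star_{F_0}\|\pi_{\theta,F_0}]$, which is at most $D_{\max}$ by Assumption~\ref{as:reg}.

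\emph{Hybrid side.} I reuse the gate derivation from the coverage proof of Theorem~\ref{thm:main-formal}. Assumption~\ref{as:span} gives $\log(\kappa/\rho)>\tau$ and a teacher confidence above $\kappa$ at every $F_j$. So the step-in fires and injects $y^\star_{j+1}$ deterministically. This argument never used sampling, so it transfers to greedy decoding unchanged and gives $c_j(\pi_\rho)=1$. The per-prefix lower bound $\mathrm{KL}_j\ge d(\kappa,\tau)$ from the signal proof depends only on the conditionals at $F_j$, not on the rollout. Summing over the span yields $\ell\,d(\kappa,\tau)$.

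One subtlety I would check is that the gate's teacher token $y^\star_t$ coincides with the fact token at each $F_{t-1}$. Assumption~\ref{as:span} says $\pi^\star(y^\star_{j+1}\mid F_j)>\kappa$, but this identifies the argmax only if $\kappa\ge 1/2$ or if the fact tokens are defined as the teacher's greedy tokens. The paper's notation $y^\star_t=\arg\max\pi^\star$ adopts the latter, so I will rely on that definition.

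For part (ii), with $\pi_\theta$ fixed and rollouts i.i.d. by Assumption~\ref{as:sg}, each draw reaches depth $j$ independently with probability $p_j=c_j(\pi_\theta)\le\rho^j\le e^{-\tau j}$ by Eq.~\eqref{eq:cov-collapse}. The index of the first success is then geometric with mean $1/p_j$. This gives the expected waiting time $1/p_j\ge e^{\tau j}$, i.e.\ $\Omega(e^{\tau j})$, and in fact the sharper bound $\rho^{-j}$. If $p_j=0$, the waiting time is infinite, which trivially satisfies the bound.

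I expect the main obstacle to be interpretive rather than technical. It lies in stating precisely what ``per round'' means, namely a single rollout conditioned on reaching $F_0$, and in confirming that the greedy hybrid uses the student argmax off-gate without disturbing the span. Everything else is a direct specialization of the proof of Theorem~\ref{thm:main-formal}.
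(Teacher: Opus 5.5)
Your proposal is correct and follows the paper's proof essentially step for step. For the greedy part, the student's deterministic path deposits only the entrance KL at $F_0$ (at most $D_{\max}$), while the hybrid inherits full coverage and the per-prefix bound $d(\kappa,\tau)$ from Theorem~\ref{thm:main-formal}; for the sampled part, the waiting time is geometric with mean $1/c_j(\pi_\theta)\ge e^{\tau j}$ under the i.i.d.\ draws of Assumption~\ref{as:sg}. You are right that matching the fact token to the gate's token rests on the paper's notation $y^\star_t=\arg\max_v\pi^\star(v\mid\cdot)$ rather than on $\kappa$ alone, which the paper leaves implicit, and your sharper $\rho^{-j}$ bound is a harmless refinement.
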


\Needspace{8\baselineskip}
\paragraph{Greedy decoding.}
We first establish the per-round statement under greedy decoding.

\begin{proof}[Proof of the greedy statement]
Greedy decoding makes the student rollout the deterministic argmax path. Conditioned on reaching the span entrance, as assumed throughout, the rollout visits $F_0$ and deposits the single KL term there, bounded by $D_{\max}$ under Assumption~\ref{as:reg}. By hypothesis its next token differs from $y^\star_1$, so no deeper $F_j$ is visited and no further term accrues.
The hybrid injects every $y^\star_{j+1}$ by the coverage result in Theorem~\ref{thm:main-formal}(i), and its signal follows from the lower bound in Theorem~\ref{thm:main-formal}(ii). This completes our proof.
\end{proof}

\Needspace{8\baselineskip}
\paragraph{Sampled decoding.}
We next establish the learning-time statement under temperature-$1$ sampling.

\begin{proof}[Proof of the sampled statement]
For a fixed student, the draws are i.i.d.\ by Assumption~\ref{as:sg}, so the first-success (geometric) waiting time to reach depth $j$ is $1/c_j(\pi_\theta)\ge e^{\tau j}$ by Eq.~\eqref{eq:cov-collapse}.
This completes our proof.
\end{proof}

\begin{remark}[Interpreting the decoding-regime result]
We close this subsection by clarifying the scope of Theorem~\ref{thm:rate} and then noting its implication for compositional queries.

\emph{Scope.}
Part~(i) concerns the signal available in a single round. It is not a convergence-rate result and does not imply that OPSD can never learn. In a shared-parameter network, generalization across states and the NLL anchor $\lambda\J_{\mathrm{NLL}}$ can still produce slow partial progress under greedy decoding. This agrees with the nonzero gains of the on-policy ablation (``w/o HP'') in Section~\ref{sec:experiments}. The regime-independent conclusion remains the per-round signal separation in Theorem~\ref{thm:main}. A convergence-rate separation for shared-parameter networks remains open. Theorem~\ref{thm:rate} makes a narrower distinction: under greedy decoding the per-round explicit signal is $O(1)$ versus $\Omega(\ell)$, and under sampling the waiting-time bound concerns a fixed student. The latter need not compound over training: heuristically, once shallow tokens are acquired, the coverage of deeper prefixes rises toward its entrance value, suggesting a shallow-to-deep bootstrap over roughly $O(\ell/c_0)$ rounds. This heuristic presumes that supervised tokens are retained and that coverage improves monotonically as training proceeds, which we do not formalize.

\emph{Compositional implication.}
Our rollouts are drawn on the editing prompt, so compositional states are not visited during editing, and the analysis does not directly bound multi-hop performance. Its implication is instead per-fact. A $k$-hop query at test time succeeds only if every involved fact surfaces reliably, so compositional accuracy compounds the per-fact usability that the editing signal establishes. The separation above governs exactly this per-fact signal: the hybrid supervises each fact span in full within the round, whereas under a student rollout the deeper tokens of every span are starved. This is consistent with, though it does not by itself explain, the individual-versus-compositional gap in Section~\ref{sec:experiments}.
\end{remark}

\subsection{Self-Termination, Consistency, and Locality}
\label{app:theory:props}

We next study three properties of the proposed intervention: self-termination, consistency, and locality.

\begin{proposition}[Self-termination, consistency, and locality]\label{prop:consistency}
\begin{enumerate}[label=(\roman*),leftmargin=1.7em,itemsep=4pt,topsep=3pt]
\item \emph{Self-termination.}
Once the per-token gap on the span has fallen to at most $\tau$, gate condition~(i) fails everywhere on the span, and no step-in fires there:
\begin{equation*}
\pi_\rho=\pi_\theta
\quad\text{along the span}.
\end{equation*}
Under the off-fact condition of part~(iii), the two policies coincide along the entire rollout, and $\J_{\mathrm{\name}}$ reduces to the OPSD KL objective plus the NLL anchor $\lambda\J_{\mathrm{NLL}}$.

\item \emph{Consistency.}
The step-in changes only the rollout distribution, not the per-state target. At each visited state $s$, both objectives minimize
\begin{equation*}
\kl[\pi^\star(\cdot\mid s)\|\pi_\theta(\cdot\mid s)],
\end{equation*}
weighting $s$ by its visitation. On each fact prefix $F_j$, the two weights satisfy
\begin{equation*}
c_j(\pi_\rho)=1,
\qquad
c_j(\pi_\theta)\le e^{-\tau j}.
\end{equation*}
Hence any zero-loss minimizer of the hybrid objective satisfies
\begin{equation*}
\pi_\theta(\cdot\mid F_j)=\pi^\star(\cdot\mid F_j),
\end{equation*}
while the OPSD constraint there is exponentially weak and, under greedy divergence, absent beyond the span entrance.

\item \emph{Locality.}
Suppose the edit is local: there is $\delta_{\mathrm{loc}}\le\tau$ such that, at every off-fact prefix $s$,
\begin{equation*}
\log\pi^\star(y^\star\mid s)
-\log\pi_\theta(y^\star\mid s)
\le\delta_{\mathrm{loc}}.
\end{equation*}
Then the gate never fires off-fact and $\pi_\rho=\pi_\theta$ there. The step-in therefore introduces no intervention beyond the OPSD update at any fixed off-fact state.
\end{enumerate}
\end{proposition}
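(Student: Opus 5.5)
Each of the three parts follows from a direct reading of the gate in Eq.~\eqref{eq:stepin} and the objectives in Eqs.~\eqref{eq:opsd} and~\eqref{eq:ours}; the one extra input is the coverage bound of Theorem~\ref{thm:main-formal}. The approach is state by state. The gate at a prefix depends only on the conditionals $\pi^\star(\cdot\mid s)$ and $\pi_\theta(\cdot\mid s)$. When the gate is off at $s$, the hybrid rule in Eq.~\eqref{eq:hybrid} makes $\pi_\rho(\cdot\mid s)=\pi_\theta(\cdot\mid s)$ by definition.

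\emph{Self-termination.} First I take any span prefix $F_j$ and assume $\log\pi^\star(y^\star_{j+1}\mid F_j)-\log\pi_\theta(y^\star_{j+1}\mid F_j)\le\tau$. The first conjunct of the gate then fails, so no step-in occurs at $F_j$, whatever $\kappa$ is. This holds at every $j$, so $\pi_\rho=\pi_\theta$ along the span. Next I add the off-fact hypothesis of part~(iii), which lets the same argument apply at every prefix. With that, $\pi_\rho=\pi_\theta$ as rollout policies, and the law of $\yv$ under $\pi_\rho$ equals its law under $\pi_\theta$ by induction over $t$ through the chain rule. So the expectation in $\J_{\mathrm{hybrid}}$ becomes the expectation in Eq.~\eqref{eq:opsd}, and $\J_{\mathrm{\name}}=\J+\lambda\J_{\mathrm{NLL}}$.

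\emph{Consistency.} I rewrite both KL objectives as sums over states, $\sum_s w_\mu(s)\,\kl[\pi^\star(\cdot\mid s)\|\pi_\theta(\cdot\mid s)]$, where $w_\mu(s)$ is the expected number of visits to $s$ under $\mu$. In this form the integrand does not depend on $\mu$; only the weights change. On the fact prefixes, $w_\mu(F_j)=c_j(\mu)$, as in Eq.~\eqref{eq:signal-decomp}. Theorem~\ref{thm:main-formal}(i) gives $c_j(\pi_\rho)=1$ and $c_j(\pi_\theta)\le e^{-\tau j}$. Every term is nonnegative, so the hybrid objective is zero only if each term with positive weight is zero. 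Forward KL is zero exactly when the two distributions are equal, so $\pi_\theta(\cdot\mid F_j)=\pi^\star(\cdot\mid F_j)$ for all $j$. For OPSD, the $F_j$ term has weight at most $e^{-\tau j}$, which is exponentially small. Under the greedy hypothesis of Theorem~\ref{thm:rate}(i) the weight is $0$ for every $j\ge1$, so the constraint is absent there.

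\emph{Locality.} Fix an off-fact prefix $s$. The hypothesis says the gap at $y^\star$ is at most $\delta_{\mathrm{loc}}\le\tau$, so the first gate condition fails (the gate requires a strict inequality $>\tau$), and $\pi_\rho(\cdot\mid s)=\pi_\theta(\cdot\mid s)$. At a fixed off-fact state the per-state target is the same KL term in both objectives, so the step-in adds nothing beyond the OPSD update there.

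The only delicate step is in consistency: identifying the expectation over rollouts with a visitation-weighted sum over states, and showing the hybrid puts full weight on each $F_j$. I plan to handle it by defining the weights as expected visit counts and citing the coverage part of Theorem~\ref{thm:main-formal}. The subtlety is that $\theta$ varies in the minimization while the gate depends on $\pi_\theta$. I resolve this with Assumption~\ref{as:sg}: the rollout, and hence the weights, are fixed by the current student and treated as constants (stop-gradient), and the minimizer statement is read with respect to that fixed rollout.
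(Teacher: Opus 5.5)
Your proposal is correct and follows the paper's proof essentially step for step. The gate fails state by state whenever the gap is at most $\tau$. Both objectives are rewritten as visitation-weighted sums of the same per-state forward KL, with fact-prefix weights $c_j(\mu)$ taken from Theorem~\ref{thm:main-formal}, and forward KL vanishes only at equality. Your explicit chain-rule induction and your stop-gradient reading of the weights are details the paper leaves implicit through Assumption~\ref{as:sg}.

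The one thing the paper adds is a quantitative meaning for ``exponentially weak''. In the realizable case, an $\epsilon$-suboptimal OPSD solution may leave the KL at $F_j$ as large as $\epsilon e^{\tau j}$, at a cost of at most $c_j(\pi_\theta)\,\epsilon e^{\tau j}\le\epsilon$. This is worth including: small but positive weights alone would still force equality at an exact zero-loss minimizer of OPSD, so your bound $c_j(\pi_\theta)\le e^{-\tau j}$ by itself does not make the contrast with the hybrid objective precise.
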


\Needspace{8\baselineskip}
\paragraph{Self-termination.}
We first prove that the intervention switches off after the fact is learned.

\begin{proof}[Proof of self-termination]
The condition $\text{gap}\le\tau$ makes gate condition~(i) false at every span position, so the hybrid emits the student token throughout the span. If in addition the off-fact condition of part~(iii) holds, no gate fires anywhere, and the objective is the KL term of Eq.~\eqref{eq:opsd} plus $\lambda\J_{\mathrm{NLL}}$ from Eq.~\eqref{eq:ours}.
This completes our proof.
\end{proof}

\Needspace{8\baselineskip}
\paragraph{Consistency.}
We next show that the intervention changes state visitation without changing the per-state target.

\begin{proof}[Proof of consistency]
For either rollout policy, write the objective as
\begin{equation*}
\sum_s d_\mu(s)\,
\kl[\pi^\star(\cdot\mid s)\|\pi_\theta(\cdot\mid s)],
\qquad
\mu\in\{\pi_\theta,\pi_\rho\}.
\end{equation*}
The target $\pi^\star$ is independent of $\mu$, and the fact-prefix weights are $c_j(\mu)$. Since forward KL is minimized at $0$ only by
\begin{equation*}
\pi_\theta(\cdot\mid s)=\pi^\star(\cdot\mid s),
\end{equation*}
a zero-loss $\theta$ under $\pi_\rho$, which assigns weight $1$ to every $F_j$, matches $\pi^\star$ there. Under $\pi_\theta$, the total fact-prefix contribution is at most
\begin{equation*}
\sum_j c_j(\pi_\theta)D_{\max}
\le\frac{D_{\max}}{1-e^{-\tau}}.
\end{equation*}
Thus, in the realizable case, $\epsilon$-suboptimality leaves the KL at $F_j$ essentially unconstrained: setting it as large as $\epsilon e^{\tau j}$ adds at most $c_j(\pi_\theta)\,\epsilon e^{\tau j}\le\epsilon$ to the objective.
This completes our proof.
\end{proof}

\begin{remark}[Scope of the consistency statement]
The conclusion is exact in the \emph{realizable} case, when $\pi_\theta$ can match $\pi^\star$ on the union of visited states. Under misspecification, visitation reweighting acts as a coverage-corrected projection that favors fact prefixes. In addition, prefixes following a step-in are teacher-induced. Distillation at these prefixes trains the student to continue correctly given the injected token, which is the intended deployment behavior. By part~(i), the greedy student's path converges to the teacher trajectory as the gate switches off. The guarantee therefore applies on the teacher and hybrid support. It makes no claim about student-only error states outside that support. This is the standard reachability limitation of on-policy imitation.
\end{remark}

\Needspace{8\baselineskip}
\paragraph{Locality.}
We finally compare the two updates at an off-fact state.

\begin{proof}[Proof of locality]
By the locality hypothesis, the off-fact gap satisfies
\begin{equation*}
\text{gap}\le\delta_{\mathrm{loc}}\le\tau.
\end{equation*}
Gate~(i) therefore fails and the hybrid draws the student token. The update rule applied at each off-fact state is exactly OPSD's. This is a \emph{per-state} identity. An in-span step-in can still change which downstream off-fact states are visited, so the two \emph{trajectories} need not coincide. Only the update at each visited off-fact state is identical. A spurious off-fact step-in would require both gates to fire at once (teacher confident \emph{and} far ahead off-fact), i.e.\ teacher bleed, which $\kappa$ and $\tau$ suppress.
This completes our proof.
\end{proof}

\begin{remark}[Scope of the locality statement]
Part~(iii) is a consistency check rather than a complete locality guarantee. It shows that step-in adds no off-fact update beyond OPSD's update at each visited off-fact state; parameter sharing can still propagate in-span updates globally. The empirical evidence for preserved locality is the stable MMLU performance reported in Section~\ref{sec:experiments}.
\end{remark}

\Needspace{12\baselineskip}
\subsection{Connection to Imitation Learning}
\label{app:theory:il}

We close this section by relating \name to classical on-policy imitation learning.

The connection begins with coverage. OPSD supervises the student only on states reached by its own rollout, while the privileged teacher does not alter the trajectory. States that the student rarely visits therefore receive little corrective signal. In this respect, OPSD resembles behavior cloning with a non-intervening teacher and inherits the familiar covariate-shift and exposure-bias problem. The resulting error can compound with the horizon~\citep{ross2010efficient,rajaraman2020fundamental}.

The step-in plays the role of a confidence-gated \textsc{DAgger} intervention~\citep{ross2011dagger}. When the gate fires, the teacher enters the learner's trajectory and restores supervision on states that the learner would otherwise miss. Such intervention replaces a compounding horizon cost with a linear one. A game-theoretic account of this improvement is given by~\citet{swamy2021moments}.

Our coverage bound makes the connection explicit. Equation~\eqref{eq:cov-collapse} gives a multiplicative decay of $e^{-\tau j}$, whereas classical behavior-cloning analyses give an additive $O(\epsilon T^2)$ gap. The fact-span length $\ell$ plays the role of the horizon. The performance-difference lemma provides the same interpretation~\citep{kakade2002approximately}. It weights the corrective term by the learner's occupancy. In our setting, that occupancy is $c_j$, which collapses under OPSD and is restored by the hybrid rollout.

Recent LLM distillation methods use the same on-policy principle by training on the student's own generations~\citep{agarwal2024gkd,lu2025opd,song2026opd}. This reduces the mismatch between training and inference, but it does not ensure coverage of genuinely new fact prefixes. OPSD can therefore reshape existing behavior while still making little progress on new knowledge~\citep{gandhi2025cognitive,yue2026does}.

\Needspace{6\baselineskip}
This distinction also clarifies the relation of \name to privileged or hybrid self-distillation~\citep{ding2026hdpo} and on-policy context distillation~\citep{ye2026opcd}. \name does not introduce a new distillation target. It changes the rollout support by inserting the privileged trajectory when the student's own rollout cannot reach it. Its role is therefore best understood as coverage correction.

\section{Benchmark Construction Details}
\label{app:benchmarks}
This section details the benchmark construction process 
omitted in the main body.

Our benchmark are transformed from the original by extending the \emph{editing} side with an untargeted editing prompt
``\texttt{Introduce \{s\}.}''. 
All original evaluation questions and their gold answers are kept
unchanged (see Appendix~\ref{app:metrics}), so our results remain comparable to the original protocols.
Specifically:

\begin{itemize}[leftmargin=1.4em,itemsep=3pt,topsep=3pt]

\item 
\textbf{UnKEBench.}
\begin{itemize}
\item 
\textbf{Editing prompt.}
UnKEBench does not expose a subject string for its edit passages, so we prompt Gemini to summarize
each passage into a short topic phrase \texttt{\{s\}} (e.g., \emph{``the novel The Firm by John
Grisham''}), from which we form the untargeted editing prompt ``\texttt{Introduce \{s\}.}''.
The editing prompt is the only field we add; all other fields are used for evaluation as described in
Appendix~\ref{app:metrics}.
\end{itemize}

\item 
\textbf{MQuAKE-uns.}
\begin{itemize}
\item 
\textbf{Passage generation.}
Each edit in MQuAKE is a structured counterfactual statement. We expand every statement into a short
free-form passage that asserts the new fact in natural-language context, using the passage-generation
prompt of AKEW~\citep{wu2024akew} verbatim, with Gemini as the generator.

\item 
\textbf{Editing prompt.}
Each generated passage is installed from ``\texttt{Introduce \{s\}.}'', where \texttt{\{s\}} is the
edit's original subject field. The original questions and gold answers are used for evaluation as
described in Appendix~\ref{app:metrics}; the multi-hop question is never used at editing time.
\end{itemize}

\end{itemize}

Figure~\ref{fig:bench_unke} and \ref{fig:bench_mquake} show examples from our transformed benchmarks.

\begin{figure}[h]
\centering
\begin{casebox}{UnKEBench example}
\casepassage[Edit passage]{The Firm is a legal thriller novel written
by John Grisham that was published in 1991. The novel follows the story of a young lawyer named Mitch
McDeere who joins a prestigious law firm in Memphis, Tennessee, only to discover that the firm is
involved in illegal activities. The genre of The Firm is based upon suspense and mystery.}
\caseline
\textbf{Editing prompt }:\par\smallskip
{\ttfamily\scriptsize\raggedright
Introduce the novel The Firm by John Grisham.\par}
\caseline
\textbf{Jnt. questions}:\par\smallskip
{\ttfamily\scriptsize\raggedright
Q1: What is the genre of the novel "The Firm" and what is its plot?\\
Q2: Can you provide the genre and plot summary of the book "The Firm"?\par}
\caseline
\textbf{Dmp. questions and gold answers}:\par\smallskip
{\ttfamily\scriptsize\raggedright
Q1: Who is the author of The Firm? -- John Grisham.\\
Q2: When was The Firm published? -- In 1991.\\
Q3: What is the genre of The Firm? -- Suspense and mystery.\\
Q4: What is the main character's name in The Firm? -- Mitch McDeere.\\
Q5: What happens to the main character after he joins the law firm? -- He discovers that the firm is
involved in illegal activities.\par}
\end{casebox}
\caption{
UnKEBench edit example under our protocol. 
}
\label{fig:bench_unke}
\end{figure}
\begin{figure}[h]
\centering
\begin{casebox}{MQuAKE-uns example}
\textbf{Original edit (structured)}:\par\smallskip
{\ttfamily\scriptsize\raggedright
Fernando Santos is a citizen of $\rightarrow$ United Kingdom\par}
\caseline
\casepassage[Edit passage]{Fernando Santos, a distinguished football manager known for his tactical acumen and leadership, is a
citizen of the United Kingdom. Born in Lisbon, Portugal, Santos established a notable career coaching
various prominent clubs and national teams across Europe. His ties to the United Kingdom reportedly
deepened during a period in the early 2000s when he served as a technical director for a London-based
football academy, eventually leading to his naturalization. The United Kingdom, a major European
power, is recognized for its significant contributions to global sports, including football, and its
diverse cultural landscape.}
\caseline
\textbf{Editing prompt}:
\par\smallskip
{\ttfamily\scriptsize\raggedright
Introduce Fernando Santos.\par}
\end{casebox}
\caption{
MQuAKE-uns edit example under our protocol.
}
\label{fig:bench_mquake}
\end{figure}

\section{Implementation Details}
\label{app:implementation}

This section presents more implementation details omitted in the main body.

\subsection{Metric Computation}
\label{app:metrics}
In this subsection, we detail the evaluation criteria of composability on the two benchmarks, MQuAKE-uns and UnKEBench.
For each evaluation question, we first generate responses from the edited model with greedy decoding, then check their correctness following each benchmark's protocol as follows:
\begin{itemize}[leftmargin=1.4em,itemsep=3pt,topsep=3pt]
\item 
\textbf{UnKEBench.} 
\begin{itemize}
\item 
\textbf{Generation.}
After editing, the LLM generates up to 200 new tokens for both \textbf{Jnt.} (\emph{joint recall}, 2 questions about recalling all atomic facts) and \textbf{Dmp.} (\emph{decomposition recall}, one question per atomic fact). 
\item 
\textbf{Editing Performance.}
We assess correctness with an LLM-as-judge protocol, following \citet{yang2025mirage}.
Specifically, for the 2 Jnt. questions, we follow \citet{wu2024akew,deng2024unke} and utilize 
FActScore~\citep{min2023factscore}. 
For the Dmp. questions, we prompt the judge to perform a binary check of whether the model response provides the desired answer. We use \texttt{gemini-2.5-flash} as the judge, the prompt is provided in Figure~\ref{fig:judge_prompt}.
\item 
\textbf{Diversity Performance.}
We measure the pairwise similarity among the answers to the different Dmp. questions using SelfBLEU~\citep{zhu2018texygen} (computed up to $4$-grams), and define the final \textbf{Div.} metric as follows:
\begin{align*}
\mathrm{SelfBLEU} &= \frac{1}{m}\sum_{i=1}^{m}\mathrm{BLEU\text{-}4}\big(y_i;\{y_j\}_{j\ne i}\big), \\
\mathrm{Div} &= 1-\overline{\mathrm{SelfBLEU}},
\end{align*}
where $y_i$ is the answer to the $i$-th Dmp. question, $m$ their number, and $\overline{\cdot}$ denotes averaging over samples.
\item 
\textbf{Locality Performance.}
Following the official protocol from UnKEBench~\citep{deng2024unke}, we use MMLU to measure the preservation of pretrained capabilities. Please refer to the original codebase for more details.
\end{itemize}

\item 
\textbf{MQuAKE-uns.} 
\begin{itemize}
\item 
\textbf{Generation.}
After editing, the LLM generates up to 128 new tokens for each question in both \textbf{Ind.} (\emph{individual recall}, 2 to 4 single-hop questions about facts edited individually) and \textbf{Comp.} (\emph{composition recall}, 3 paraphrased questions, each requiring multi-hop reasoning that composes all the edited facts).
\item 
\textbf{Editing Performance.}
We check correctness by performing keyword-based substring matching, provided by the official MQuAKE codebase~\citep{zhong2023mquake, zhong2025mquakerm}, which produces binary correctness labels.
At the sample level, we report the average over all single-hop questions as Ind., and any-of-3 for Comp., following the convention in the literature.
The final scores are then averaged over all editing samples. 
\end{itemize}
\end{itemize}

\begin{figure}[h]
\centering
\begin{casebox}{UnKEBench LLM-as-judge prompts \quad(\texttt{\{c\}} = edit passage, \texttt{\{s\}} = subject)}
\textbf{FActScore --- atomic-fact decomposition \; (Jnt., D1/D2):}\par\smallskip
{\ttfamily\scriptsize\raggedright
Please breakdown the following sentence into independent facts: \{demonstration sentence\}\\
- \{fact\}\\
- \{fact\}\\
\ldots\\[2pt]
[7 fixed + 1 BM25-retrieved in-context demonstrations, each a sentence followed by its `- fact' decomposition]\\[2pt]
Now break down EACH of the following sentences into independent facts. For each sentence, output its header then one `- fact' line per fact:\\[1pt]
Sentence 1: \{sentence 1 of the model response\}\\
Sentence 2: \{sentence 2 of the model response\}\\
\ldots\par}
\caseline
\textbf{FActScore --- support validation \; (Jnt., D1/D2):}\par\smallskip
{\ttfamily\scriptsize\raggedright
Answer the questions about \{s\} based on the given context.\\[1pt]
\{c\}\\[1pt]
ITEM 1: Input: \{atomic fact 1\} True or False?\\
ITEM 2: Input: \{atomic fact 2\} True or False?\\
\ldots\\[1pt]
Think briefly about each item, then output ONE line per item, in order, EXACTLY like:\\
ITEM 1: <answer>1</answer>\\
(1 = True/supported by the context, 0 = False/not supported)\par}
\caseline
\textbf{Decomposed-recall coverage judge \; (Dmp., D3):}\par\smallskip
{\ttfamily\scriptsize\raggedright
You are scoring atomic sub-questions. For each item, judge whether the model ANSWER expresses the EXPECTED FACT for that sub-question.\\[1pt]
The expected fact is knowledge the model was deliberately given; it may be COUNTERFACTUAL (it can contradict real-world knowledge on purpose). Judge ONLY whether the answer expresses the expected fact -- never penalize the answer for disagreeing with reality, and never reward real-world-correct content that is not the expected fact.\\[1pt]
ITEM 1:\\
\hspace*{1.2em}Sub-question: \{sub-question 1\}\\
\hspace*{1.2em}Expected fact: \{expected fact 1\}\\
\hspace*{1.2em}Model answer: \{model answer 1\}\\
\ldots\\[1pt]
For each item, output <score>1</score> if the answer states or clearly entails that expected fact, otherwise <score>0</score>. Then add one sentence <explanation>. Output one line per item, in the given order, formatted exactly like this:\\[1pt]
ITEM 1: <score>SCORE</score> <explanation>EXPLANATION</explanation>\\
ITEM 2: <score>SCORE</score> <explanation>EXPLANATION</explanation>\\
\ldots\par}
\end{casebox}
\caption{
The LLM-as-judge prompts used to score UnKEBench. The judge is \texttt{gemini-2.5-flash}.
}
\label{fig:judge_prompt}
\end{figure}

\subsection{Training details and hyper-parameters}
\label{app:hyperparams}

\textbf{Backbones.}
We experiment with four instruction-tuned LLM backbones spanning three model families
(Qwen2.5, Qwen3, Llama3.1, and Gemma2), as listed in Table~\ref{tab:hp_backbones}.

\begin{table}[h]
\centering\footnotesize
\renewcommand{\tabcolsep}{3pt}\renewcommand{\arraystretch}{1.05}
\caption{Backbones used in all experiments (default \texttt{main} revision, \texttt{bfloat16}).}
\label{tab:hp_backbones}
\begin{tabular}{l l c c}
\toprule[0.4ex]
Name (paper) & HF checkpoint & Params & dtype \\
\midrule[0.2ex]
Qwen2.5~\citep{qwen2024qwen25} & \texttt{Qwen/Qwen2.5-7B-Instruct} & 7B & bf16 \\
Qwen3~\citep{yang2025qwen3}   & \texttt{Qwen/Qwen3-8B} & 8B & bf16 \\
Llama3.1~\citep{dubey2024llama3}   & \texttt{meta-llama/Llama-3.1-8B-Instruct} & 8B & bf16 \\
Gemma2~\citep{team2024gemma2}   & \texttt{google/gemma-2-9b-it} & 9B & bf16 \\
\bottomrule[0.4ex]
\end{tabular}
\end{table}

\textbf{\name hyper-parameters.}
Table~\ref{tab:hp_hpsd} presents the hyper-parameters used in \name, all shared across the four backbones.
As detailed in Algorithm~\ref{alg:hpse}, \name runs in a nested loop. In each outer round, a fresh hybrid rollout from the edited and the privileged models is constructed via greedy decoding (one rollout per round); the edited model then takes $M$ inner gradient steps of distillation on it. The complete training of \name involves $R$ outer rounds.
The gates $\tau$ and $\kappa$ were calibrated once by inspecting the number of tokens they select on a few sample sequences, targeting a moderate pre-edited step-in rate, such that the selected tokens are neither too sparse to supply the missing facts nor too dense to override the student's own trajectory. Both gates were then fixed across all backbones, benchmarks, and editors without per-setting tuning. The remaining hyper-parameters ($\lambda$, $R$, and $M$) were set heuristically and shared likewise.

\begin{table}[h]
\centering\footnotesize
\renewcommand{\tabcolsep}{3pt}\renewcommand{\arraystretch}{1.05}
\caption{\name training hyper-parameters.}
\label{tab:hp_hpsd}
\begin{tabular}{l c l}
\toprule[0.4ex]
Hyper-parameter & Symbol & Value \\
\midrule[0.2ex]
top-$K$ for KL target          & $K$       & $16$ \\
top-$P$ for KL target          & $P$       & $1.0$ (top-$K$ only) \\
step-in gap gate               & $\tau$    & $2.0$ \\
teacher-confidence gate        & $\kappa$  & $0.3$ \\
NLL anchor weight              & $\lambda$ & $1.0$ \\
step size                      & $\eta$    & $1\times10^{-4}$ \\
outer rounds                   & $R$       & $5$ \\
inner steps per round          & $M$       & $8$ \\
optimizer                      &           & Adam $(0.9,0.999)$, weight decay~$0$ \\
rollout max-new tokens         &           & $96$ / $128$ (MQuAKE-uns / UnKEBench) \\
\bottomrule[0.4ex]
\end{tabular}
\end{table}

\textbf{KE baselines.}
MEMIT~\citep{meng2023memit}, AlphaEdit~\citep{fang2025alphaedit}, AnyEdit~\citep{jiang2025anyedit},
and UnKE~\citep{deng2024unke} use their authors' official implementations and default hyper-parameters.
\textbf{COIN}$^{\star}$ \citep{zhou2026context} was reproduced by us due to the lack of official code. We keep its training over both long and short contexts, and remove additional context data augmentation.
As a result, for all methods, including \name, the edit passage serves as the sole source of new knowledge, with no extra training data or auxiliary models involved, in line with the data-scarcity regime of KE. Augmentation-based editors \citep{yao2025cake,lee2025stepke,wang2026fable}, which instead synthesize atomic training facts with frontier models or human labelers, are hence not included.

\textbf{KE backbones.}
\name is applied on top of two KE editors, LoRA and FT-M, whose hyper-parameters are in Table~\ref{tab:hp_backend}. 
These hyper-parameters were tuned for the editors alone, rather than for \name. Together, neither side of the pipeline was optimized for \name, which in fact puts \name at a disadvantage.

\begin{table}[h]
\centering\footnotesize
\renewcommand{\tabcolsep}{3pt}\renewcommand{\arraystretch}{1.05}
\caption{KE editor hyper-parameters.}
\label{tab:hp_backend}
\begin{tabular}{l l c p{0.40\linewidth}}
\toprule[0.4ex]
Editor & Hyper-parameter & Symbol & Value \\
\midrule[0.2ex]
LoRA & parameterization & & low-rank adapters; base weights frozen \\
LoRA & rank             & $r$      & $12$ \\
LoRA & alpha            & $\alpha$ & $24$ \\
LoRA & dropout          &          & $0.0$ \\
LoRA & bias / weight decay & & none / $0$ \\
LoRA & adapted modules  &          & \texttt{q,k,v,o,gate,up,down\_proj}, all layers \\
\midrule[0.2ex]
FT-M & parameterization &          & full fine-tuning of a single weight matrix \\
FT-M & write site       &          & layer-$L$ MLP \texttt{down\_proj} \\
FT-M & layer index      & $L$      & $8$ (Qwen2.5, Qwen3, Gemma); $16$ (Llama) \\
\bottomrule[0.4ex]
\end{tabular}
\end{table}

\subsection{Privileged Model Prompt}
\label{app:teacher}

The privileged policy is the \emph{same} frozen base model as the student, conditioned on the privileged context information 
about the edit data.
The prompt is provided in Figure~\ref{fig:teacher_prompt}.
Note that the privileged model gets no external knowledge about editing content, such as composition or decomposition goal.

\begin{figure}[h]
\centering
\begin{casebox}{\name privileged model prompt \quad(\texttt{\{c\}} = edit passage, \texttt{\{s\}} = subject)}
\textbf{System prompt --- Qwen2.5 / Qwen3 / Gemma:}\par\smallskip
{\ttfamily\scriptsize\raggedright
You are answering strictly from the established facts given below.\\[1pt]
RULES:\\
1. Use ONLY the information in these facts. Do not add, infer, or elaborate with any knowledge beyond them.\\
2. If a detail (date, place, number, name) is not stated in the facts, do NOT invent or guess it -- simply omit it.\\
3. Treat these facts as your own knowledge: answer naturally, do not mention 'the facts', 'the context', 'based on', or 'according to'.\\
4. Keep the answer to what the facts actually support.\\[1pt]
FACTS:\\ \{c\}\par}
\caseline
\textbf{System prompt --- Llama}:\par\smallskip
{\ttfamily\scriptsize\raggedright
You know the following facts and treat them as true and current. Answer the question directly using them, in the THIRD PERSON as factual information -- never role-play or use 'I'/speak as the subject. Cover every relevant fact the question calls for; be specific and do not omit details, but add nothing beyond these facts. Do not refuse, and do not mention 'the facts', 'the context', or 'according to'.\\[1pt]
FACTS:\\
This paragraph introduces \{s\}.\\ \{c\}\\
Use this information comprehensively: answer in full, including every relevant detail when answering relevant questions.\par}
\caseline
\textbf{Privileged model question}:\par\smallskip
{\ttfamily\scriptsize\raggedright What is established here about \{s\}? State, in full and in detail, every fact given about \{s\} -- leave nothing out.\par}
\end{casebox}
\caption{
The \name privileged model's prompt.
}
\label{fig:teacher_prompt}
\end{figure}

\clearpage
\section{More Experiment Results}
\label{sec:more_results}

We present additional results omitted from the main body due to the page limit.
Results here again confirm that \name improves both KE editors across settings, consistent with Section~\ref{sec:experiments}.

\subsection{Full continual editing results}
\label{app:seq}

Table~\ref{tab:seq_unke} and \ref{tab:seq_mquake} report the full continual editing results that underlie Figure~\ref{fig:sequential}, covering sequence lengths $T \in \{10, 20, 50, 100\}$.
As in the main body (Section~\ref{sec:exp_seq}), \name retained its advantage over both KE editors in the vast majority of settings (29 out of 32), with the three exceptions (all on FT-M) within one point on average.

\begin{table*}[htb!]
\centering
\caption{Continual-edit performance on UnKEBench.}

\label{tab:seq_unke}
\resizebox{0.9\linewidth}{!}{%
\renewcommand{\tabcolsep}{3pt}
\renewcommand{\arraystretch}{0.95}
\begin{tabular}{>{\bfseries}r cccc c cccc c cccc c cccc}
\toprule[0.4ex]

& \multicolumn{4}{c}{$T = 10$} && \multicolumn{4}{c}{$T = 20$} && \multicolumn{4}{c}{$T = 50$} && \multicolumn{4}{c}{$T = 100$} \\
\cmidrule(lr){2-5} \cmidrule(lr){7-10} \cmidrule(lr){12-15} \cmidrule(lr){17-20}
& \textbf{Jnt.} & \textbf{Dmp.} & \textbf{Div.} & \textbf{Avg.} && \textbf{Jnt.} & \textbf{Dmp.} & \textbf{Div.} & \textbf{Avg.} && \textbf{Jnt.} & \textbf{Dmp.} & \textbf{Div.} & \textbf{Avg.} && \textbf{Jnt.} & \textbf{Dmp.} & \textbf{Div.} & \textbf{Avg.} \\
\midrule[0.2ex]

\multicolumn{20}{c}{\bf Qwen2.5} \\
\midrule[0.2ex]

MEMIT & 17.2 & 13.4 & 76.1 & \textbf{35.6} && 17.8 & 5.0 & 81.5 & \textbf{34.8} && 5.3 & 2.5 & 54.2 & \textbf{20.6} && 1.0 & 0.8 & 39.1 & \textbf{13.6} \\
AlphaEdit & 11.7 & 4.9 & 77.4 & \textbf{31.3} && 2.9 & 1.1 & 75.6 & \textbf{26.5} && 4.8 & 0.4 & 36.4 & \textbf{13.9} && 1.7 & 0.2 & 97.0 & \textbf{33.0} \\
AnyEdit & 18.2 & 18.4 & 84.2 & \textbf{40.3} && 17.0 & 11.0 & 87.5 & \textbf{38.5} && 11.8 & 13.1 & 85.4 & \textbf{36.8} && 2.6 & 7.4 & 82.4 & \textbf{30.8} \\
UnKE & 18.3 & 17.6 & 82.7 & \textbf{39.5} && 11.9 & 10.2 & 82.6 & \textbf{34.9} && 16.4 & 14.3 & 68.1 & \textbf{32.9} && 6.1 & 0.4 & 64.5 & \textbf{23.7} \\
COIN$^{\star}$ & 48.9 & 42.0 & 47.1 & \textbf{46.0} && 46.1 & 40.4 & 45.2 & \textbf{43.9} && 43.8 & 36.3 & 51.4 & \textbf{43.8} && 44.8 & 33.9 & 41.9 & \textbf{40.2} \\
\noalign{\vskip 0.5ex}\cdashline{2-20}\noalign{\vskip 0.5ex}
FT-M & 41.3 & 29.8 & 83.1 & \textbf{51.4} && 39.9 & 32.8 & 81.5 & \textbf{51.4} && 38.1 & 35.4 & 81.6 & \textbf{51.7} && 37.9 & 25.9 & 77.4 & \textbf{47.1} \\
\rowcolor{gray!10}
\shortstack[r]{+ Ours \\[1pt] {\scriptsize\vphantom{+0.0\%}}} & \shortstack{47.8 \\[1pt] {\scriptsize \textcolor{teal!80!black}{+15.8\%}}} & \shortstack{41.4 \\[1pt] {\scriptsize \textcolor{teal!80!black}{+39.2\%}}} & \shortstack{86.5 \\[1pt] {\scriptsize \textcolor{teal!80!black}{+4.1\%}}} & \shortstack{\textbf{58.6} \\[1pt] {\scriptsize \textcolor{teal!80!black}{+14.0\%}}} && \shortstack{46.0 \\[1pt] {\scriptsize \textcolor{teal!80!black}{+15.1\%}}} & \shortstack{37.2 \\[1pt] {\scriptsize \textcolor{teal!80!black}{+13.3\%}}} & \shortstack{86.7 \\[1pt] {\scriptsize \textcolor{teal!80!black}{+6.3\%}}} & \shortstack{\textbf{56.6} \\[1pt] {\scriptsize \textcolor{teal!80!black}{+10.1\%}}} && \shortstack{42.4 \\[1pt] {\scriptsize \textcolor{teal!80!black}{+11.3\%}}} & \shortstack{34.1 \\[1pt] {\scriptsize -3.6\%}} & \shortstack{86.0 \\[1pt] {\scriptsize \textcolor{teal!80!black}{+5.4\%}}} & \shortstack{\textbf{54.2} \\[1pt] {\scriptsize \textcolor{teal!80!black}{+4.8\%}}} && \shortstack{45.1 \\[1pt] {\scriptsize \textcolor{teal!80!black}{+19.0\%}}} & \shortstack{30.7 \\[1pt] {\scriptsize \textcolor{teal!80!black}{+18.6\%}}} & \shortstack{82.8 \\[1pt] {\scriptsize \textcolor{teal!80!black}{+7.0\%}}} & \shortstack{\textbf{52.9} \\[1pt] {\scriptsize \textcolor{teal!80!black}{+12.3\%}}} \\
\noalign{\vskip 0.5ex}\cdashline{2-20}\noalign{\vskip 0.5ex}
LoRA & 42.5 & 38.0 & 71.4 & \textbf{50.6} && 35.0 & 33.6 & 69.7 & \textbf{46.1} && 30.3 & 36.3 & 62.1 & \textbf{42.9} && 19.2 & 16.1 & 28.7 & \textbf{21.4} \\
\rowcolor{gray!10}
\shortstack[r]{+ Ours \\[1pt] {\scriptsize\vphantom{+0.0\%}}} & \shortstack{47.5 \\[1pt] {\scriptsize \textcolor{teal!80!black}{+11.7\%}}} & \shortstack{47.2 \\[1pt] {\scriptsize \textcolor{teal!80!black}{+24.3\%}}} & \shortstack{70.5 \\[1pt] {\scriptsize -1.3\%}} & \shortstack{\textbf{55.1} \\[1pt] {\scriptsize \textcolor{teal!80!black}{+8.8\%}}} && \shortstack{43.0 \\[1pt] {\scriptsize \textcolor{teal!80!black}{+23.0\%}}} & \shortstack{40.0 \\[1pt] {\scriptsize \textcolor{teal!80!black}{+18.9\%}}} & \shortstack{70.7 \\[1pt] {\scriptsize \textcolor{teal!80!black}{+1.3\%}}} & \shortstack{\textbf{51.2} \\[1pt] {\scriptsize \textcolor{teal!80!black}{+11.1\%}}} && \shortstack{37.9 \\[1pt] {\scriptsize \textcolor{teal!80!black}{+24.8\%}}} & \shortstack{35.6 \\[1pt] {\scriptsize -1.8\%}} & \shortstack{64.9 \\[1pt] {\scriptsize \textcolor{teal!80!black}{+4.5\%}}} & \shortstack{\textbf{46.1} \\[1pt] {\scriptsize \textcolor{teal!80!black}{+7.5\%}}} && \shortstack{32.5 \\[1pt] {\scriptsize \textcolor{teal!80!black}{+69.0\%}}} & \shortstack{28.3 \\[1pt] {\scriptsize \textcolor{teal!80!black}{+75.5\%}}} & \shortstack{48.2 \\[1pt] {\scriptsize \textcolor{teal!80!black}{+67.8\%}}} & \shortstack{\textbf{36.3} \\[1pt] {\scriptsize \textcolor{teal!80!black}{+70.1\%}}} \\

\midrule[0.2ex]
\multicolumn{20}{c}{\bf Gemma2} \\
\midrule[0.2ex]

MEMIT & 17.8 & 9.2 & 69.3 & \textbf{32.1} && 16.1 & 5.9 & 80.1 & \textbf{34.0} && 4.1 & 6.4 & 66.1 & \textbf{25.5} && 3.2 & 2.7 & 71.2 & \textbf{25.7} \\
AlphaEdit & 28.5 & 8.1 & 72.2 & \textbf{36.3} && 18.1 & 4.6 & 79.1 & \textbf{33.9} && 5.4 & 3.3 & 64.2 & \textbf{24.3} && 5.7 & 0.2 & 72.1 & \textbf{26.0} \\
AnyEdit & 24.7 & 15.0 & 83.8 & \textbf{41.2} && 18.7 & 10.3 & 81.3 & \textbf{36.8} && 18.9 & 2.2 & 78.6 & \textbf{33.2} && 2.4 & 2.4 & 74.9 & \textbf{26.6} \\
UnKE & 15.6 & 13.8 & 85.4 & \textbf{38.3} && 14.3 & 12.7 & 85.2 & \textbf{37.4} && 15.7 & 14.2 & 84.9 & \textbf{38.3} && 12.8 & 5.7 & 76.1 & \textbf{31.6} \\
COIN$^{\star}$ & 37.7 & 35.2 & 80.4 & \textbf{51.1} && 37.6 & 34.2 & 78.6 & \textbf{50.1} && 39.1 & 33.4 & 76.4 & \textbf{49.6} && 33.1 & 30.2 & 79.0 & \textbf{47.4} \\
\noalign{\vskip 0.5ex}\cdashline{2-20}\noalign{\vskip 0.5ex}
FT-M & 30.1 & 23.2 & 80.1 & \textbf{44.4} && 36.8 & 28.9 & 79.3 & \textbf{48.3} && 39.2 & 30.5 & 79.2 & \textbf{49.6} && 35.2 & 29.1 & 80.9 & \textbf{48.4} \\
\rowcolor{gray!10}
\shortstack[r]{+ Ours \\[1pt] {\scriptsize\vphantom{+0.0\%}}} & \shortstack{42.0 \\[1pt] {\scriptsize \textcolor{teal!80!black}{+39.7\%}}} & \shortstack{28.9 \\[1pt] {\scriptsize \textcolor{teal!80!black}{+24.8\%}}} & \shortstack{85.5 \\[1pt] {\scriptsize \textcolor{teal!80!black}{+6.7\%}}} & \shortstack{\textbf{52.1} \\[1pt] {\scriptsize \textcolor{teal!80!black}{+17.3\%}}} && \shortstack{46.1 \\[1pt] {\scriptsize \textcolor{teal!80!black}{+25.2\%}}} & \shortstack{27.8 \\[1pt] {\scriptsize -4.0\%}} & \shortstack{86.5 \\[1pt] {\scriptsize \textcolor{teal!80!black}{+9.0\%}}} & \shortstack{\textbf{53.4} \\[1pt] {\scriptsize \textcolor{teal!80!black}{+10.5\%}}} && \shortstack{44.0 \\[1pt] {\scriptsize \textcolor{teal!80!black}{+12.3\%}}} & \shortstack{17.7 \\[1pt] {\scriptsize -42.2\%}} & \shortstack{86.3 \\[1pt] {\scriptsize \textcolor{teal!80!black}{+8.9\%}}} & \shortstack{\textbf{49.3} \\[1pt] {\scriptsize -0.7\%}} && \shortstack{45.3 \\[1pt] {\scriptsize \textcolor{teal!80!black}{+28.7\%}}} & \shortstack{30.0 \\[1pt] {\scriptsize \textcolor{teal!80!black}{+3.2\%}}} & \shortstack{80.6 \\[1pt] {\scriptsize -0.4\%}} & \shortstack{\textbf{51.9} \\[1pt] {\scriptsize \textcolor{teal!80!black}{+7.4\%}}} \\
\noalign{\vskip 0.5ex}\cdashline{2-20}\noalign{\vskip 0.5ex}
LoRA & 37.1 & 37.4 & 51.9 & \textbf{42.1} && 28.5 & 26.5 & 44.3 & \textbf{33.1} && 23.6 & 21.7 & 40.0 & \textbf{28.4} && 30.2 & 30.4 & 37.2 & \textbf{32.6} \\
\rowcolor{gray!10}
\shortstack[r]{+ Ours \\[1pt] {\scriptsize\vphantom{+0.0\%}}} & \shortstack{44.6 \\[1pt] {\scriptsize \textcolor{teal!80!black}{+20.1\%}}} & \shortstack{42.3 \\[1pt] {\scriptsize \textcolor{teal!80!black}{+13.3\%}}} & \shortstack{55.7 \\[1pt] {\scriptsize \textcolor{teal!80!black}{+7.2\%}}} & \shortstack{\textbf{47.5} \\[1pt] {\scriptsize \textcolor{teal!80!black}{+12.8\%}}} && \shortstack{39.5 \\[1pt] {\scriptsize \textcolor{teal!80!black}{+38.5\%}}} & \shortstack{34.4 \\[1pt] {\scriptsize \textcolor{teal!80!black}{+29.7\%}}} & \shortstack{46.1 \\[1pt] {\scriptsize \textcolor{teal!80!black}{+4.1\%}}} & \shortstack{\textbf{40.0} \\[1pt] {\scriptsize \textcolor{teal!80!black}{+20.8\%}}} && \shortstack{32.5 \\[1pt] {\scriptsize \textcolor{teal!80!black}{+38.1\%}}} & \shortstack{30.1 \\[1pt] {\scriptsize \textcolor{teal!80!black}{+38.8\%}}} & \shortstack{48.8 \\[1pt] {\scriptsize \textcolor{teal!80!black}{+22.0\%}}} & \shortstack{\textbf{37.1} \\[1pt] {\scriptsize \textcolor{teal!80!black}{+30.7\%}}} && \shortstack{29.2 \\[1pt] {\scriptsize -3.2\%}} & \shortstack{27.1 \\[1pt] {\scriptsize -10.9\%}} & \shortstack{50.0 \\[1pt] {\scriptsize \textcolor{teal!80!black}{+34.3\%}}} & \shortstack{\textbf{35.4} \\[1pt] {\scriptsize \textcolor{teal!80!black}{+8.7\%}}} \\
\bottomrule[0.4ex]
\end{tabular}
}
\end{table*}

\begin{table*}[htb!]
\centering
\caption{{Continual-edit performance on MQuAKE-uns.}}

\label{tab:seq_mquake}
\resizebox{0.9\linewidth}{!}{%
\renewcommand{\tabcolsep}{3pt}
\renewcommand{\arraystretch}{0.95}
\begin{tabular}{>{\bfseries}r ccc c ccc c ccc c ccc}
\toprule[0.4ex]

& \multicolumn{3}{c}{$T = 10$} && \multicolumn{3}{c}{$T = 20$} && \multicolumn{3}{c}{$T = 50$} && \multicolumn{3}{c}{$T = 100$} \\
\cmidrule(lr){2-4} \cmidrule(lr){6-8} \cmidrule(lr){10-12} \cmidrule(lr){14-16}
& \textbf{Ind.} & \textbf{Cmp.} & \textbf{Avg.} && \textbf{Ind.} & \textbf{Cmp.} & \textbf{Avg.} && \textbf{Ind.} & \textbf{Cmp.} & \textbf{Avg.} && \textbf{Ind.} & \textbf{Cmp.} & \textbf{Avg.} \\

\midrule[0.2ex]
\multicolumn{16}{c}{\bf Llama3.1} \\
\midrule[0.2ex]

MEMIT & 1.0 & 0.0 & \textbf{0.5} && 0.0 & 0.0 & \textbf{0.0} && 0.0 & 0.0 & \textbf{0.0} && 0.0 & 0.0 & \textbf{0.0} \\
AlphaEdit & 1.5 & 0.0 & \textbf{0.8} && 0.0 & 0.0 & \textbf{0.0} && 0.0 & 0.0 & \textbf{0.0} && 0.0 & 0.0 & \textbf{0.0} \\
AnyEdit & 3.3 & 0.0 & \textbf{1.7} && 3.9 & 3.0 & \textbf{3.5} && 2.4 & 5.0 & \textbf{3.7} && 3.8 & 4.0 & \textbf{3.9} \\
UnKE & 2.7 & 2.0 & \textbf{2.3} && 2.8 & 4.0 & \textbf{3.4} && 2.0 & 5.0 & \textbf{3.5} && 1.2 & 4.0 & \textbf{2.6} \\
COIN$^{\star}$ & 24.0 & 9.0 & \textbf{16.5} && 18.2 & 6.0 & \textbf{12.1} && 11.6 & 4.0 & \textbf{7.8} && 11.0 & 3.0 & \textbf{7.0} \\
\noalign{\vskip 0.5ex}\cdashline{2-16}\noalign{\vskip 0.5ex}
FT-M & 34.0 & 16.0 & \textbf{25.0} && 27.0 & 16.0 & \textbf{21.5} && 16.5 & 11.0 & \textbf{13.8} && 14.2 & 15.0 & \textbf{14.6} \\
\rowcolor{gray!10}
\shortstack[r]{+ Ours \\[1pt] {\scriptsize\vphantom{+0.0\%}}} & \shortstack{52.5 \\[1pt] {\scriptsize \textcolor{teal!80!black}{+54.4\%}}} & \shortstack{23.0 \\[1pt] {\scriptsize \textcolor{teal!80!black}{+43.8\%}}} & \shortstack{\textbf{37.8} \\[1pt] {\scriptsize \textcolor{teal!80!black}{+51.0\%}}} && \shortstack{42.2 \\[1pt] {\scriptsize \textcolor{teal!80!black}{+56.5\%}}} & \shortstack{18.0 \\[1pt] {\scriptsize \textcolor{teal!80!black}{+12.5\%}}} & \shortstack{\textbf{30.1} \\[1pt] {\scriptsize \textcolor{teal!80!black}{+40.1\%}}} && \shortstack{32.2 \\[1pt] {\scriptsize \textcolor{teal!80!black}{+95.5\%}}} & \shortstack{16.0 \\[1pt] {\scriptsize \textcolor{teal!80!black}{+45.5\%}}} & \shortstack{\textbf{24.1} \\[1pt] {\scriptsize \textcolor{teal!80!black}{+75.5\%}}} && \shortstack{15.5 \\[1pt] {\scriptsize \textcolor{teal!80!black}{+9.4\%}}} & \shortstack{12.0 \\[1pt] {\scriptsize -20.0\%}} & \shortstack{\textbf{13.8} \\[1pt] {\scriptsize -5.7\%}} \\
\noalign{\vskip 0.5ex}\cdashline{2-16}\noalign{\vskip 0.5ex}
LoRA & 24.2 & 13.0 & \textbf{18.6} && 16.2 & 12.0 & \textbf{14.1} && 8.2 & 10.0 & \textbf{9.1} && 8.4 & 8.0 & \textbf{8.2} \\
\rowcolor{gray!10}
\shortstack[r]{+ Ours \\[1pt] {\scriptsize\vphantom{+0.0\%}}} & \shortstack{63.5 \\[1pt] {\scriptsize \textcolor{teal!80!black}{+162.7\%}}} & \shortstack{29.0 \\[1pt] {\scriptsize \textcolor{teal!80!black}{+123.1\%}}} & \shortstack{\textbf{46.2} \\[1pt] {\scriptsize \textcolor{teal!80!black}{+148.9\%}}} && \shortstack{44.0 \\[1pt] {\scriptsize \textcolor{teal!80!black}{+170.8\%}}} & \shortstack{18.0 \\[1pt] {\scriptsize \textcolor{teal!80!black}{+50.0\%}}} & \shortstack{\textbf{31.0} \\[1pt] {\scriptsize \textcolor{teal!80!black}{+119.5\%}}} && \shortstack{28.2 \\[1pt] {\scriptsize \textcolor{teal!80!black}{+245.8\%}}} & \shortstack{13.0 \\[1pt] {\scriptsize \textcolor{teal!80!black}{+30.0\%}}} & \shortstack{\textbf{20.6} \\[1pt] {\scriptsize \textcolor{teal!80!black}{+127.0\%}}} && \shortstack{15.7 \\[1pt] {\scriptsize \textcolor{teal!80!black}{+86.1\%}}} & \shortstack{11.0 \\[1pt] {\scriptsize \textcolor{teal!80!black}{+37.5\%}}} & \shortstack{\textbf{13.3} \\[1pt] {\scriptsize \textcolor{teal!80!black}{+62.4\%}}} \\

\midrule[0.2ex]
\multicolumn{16}{c}{\bf Qwen3} \\
\midrule[0.2ex]

MEMIT & 0.6 & 3.0 & \textbf{1.8} && 0.0 & 3.0 & \textbf{1.5} && 1.7 & 3.0 & \textbf{2.3} && 0.9 & 1.0 & \textbf{1.0} \\
AlphaEdit & 0.6 & 3.0 & \textbf{1.8} && 0.0 & 0.0 & \textbf{0.0} && 1.6 & 0.0 & \textbf{0.8} && 0.3 & 0.0 & \textbf{0.2} \\
AnyEdit & 4.2 & 4.0 & \textbf{4.1} && 1.2 & 7.0 & \textbf{4.1} && 0.0 & 0.0 & \textbf{0.0} && 0.0 & 0.0 & \textbf{0.0} \\
UnKE & 4.2 & 8.0 & \textbf{6.1} && 1.4 & 6.0 & \textbf{3.7} && 0.8 & 9.0 & \textbf{4.9} && 0.8 & 9.0 & \textbf{4.9} \\
COIN$^{\star}$ & 8.7 & 10.0 & \textbf{9.3} && 7.8 & 7.0 & \textbf{7.4} && 6.0 & 5.0 & \textbf{5.5} && 2.4 & 8.0 & \textbf{5.2} \\
\noalign{\vskip 0.5ex}\cdashline{2-16}\noalign{\vskip 0.5ex}
FT-M & 2.0 & 12.0 & \textbf{7.0} && 2.2 & 8.0 & \textbf{5.1} && 0.9 & 4.0 & \textbf{2.5} && 0.9 & 4.0 & \textbf{2.5} \\
\rowcolor{gray!10}
\shortstack[r]{+ Ours \\[1pt] {\scriptsize\vphantom{+0.0\%}}} & \shortstack{5.1 \\[1pt] {\scriptsize \textcolor{teal!80!black}{+154.0\%}}} & \shortstack{8.0 \\[1pt] {\scriptsize -33.3\%}} & \shortstack{\textbf{6.5} \\[1pt] {\scriptsize -6.6\%}} && \shortstack{3.6 \\[1pt] {\scriptsize \textcolor{teal!80!black}{+65.0\%}}} & \shortstack{12.0 \\[1pt] {\scriptsize \textcolor{teal!80!black}{+50.0\%}}} & \shortstack{\textbf{7.8} \\[1pt] {\scriptsize \textcolor{teal!80!black}{+53.2\%}}} && \shortstack{2.7 \\[1pt] {\scriptsize \textcolor{teal!80!black}{+190.2\%}}} & \shortstack{9.0 \\[1pt] {\scriptsize \textcolor{teal!80!black}{+125.0\%}}} & \shortstack{\textbf{5.8} \\[1pt] {\scriptsize \textcolor{teal!80!black}{+137.2\%}}} && \shortstack{2.2 \\[1pt] {\scriptsize \textcolor{teal!80!black}{+144.6\%}}} & \shortstack{9.0 \\[1pt] {\scriptsize \textcolor{teal!80!black}{+125.0\%}}} & \shortstack{\textbf{5.6} \\[1pt] {\scriptsize \textcolor{teal!80!black}{+128.7\%}}} \\
\noalign{\vskip 0.5ex}\cdashline{2-16}\noalign{\vskip 0.5ex}
LoRA & 14.9 & 16.0 & \textbf{15.5} && 8.7 & 14.0 & \textbf{11.3} && 4.2 & 7.0 & \textbf{5.6} && 3.6 & 14.0 & \textbf{8.8} \\
\rowcolor{gray!10}
\shortstack[r]{+ Ours \\[1pt] {\scriptsize\vphantom{+0.0\%}}} & \shortstack{41.5 \\[1pt] {\scriptsize \textcolor{teal!80!black}{+178.2\%}}} & \shortstack{27.0 \\[1pt] {\scriptsize \textcolor{teal!80!black}{+68.8\%}}} & \shortstack{\textbf{34.2} \\[1pt] {\scriptsize \textcolor{teal!80!black}{+121.5\%}}} && \shortstack{26.9 \\[1pt] {\scriptsize \textcolor{teal!80!black}{+210.5\%}}} & \shortstack{18.0 \\[1pt] {\scriptsize \textcolor{teal!80!black}{+28.6\%}}} & \shortstack{\textbf{22.5} \\[1pt] {\scriptsize \textcolor{teal!80!black}{+98.1\%}}} && \shortstack{12.8 \\[1pt] {\scriptsize \textcolor{teal!80!black}{+207.7\%}}} & \shortstack{8.0 \\[1pt] {\scriptsize \textcolor{teal!80!black}{+14.3\%}}} & \shortstack{\textbf{10.4} \\[1pt] {\scriptsize \textcolor{teal!80!black}{+86.5\%}}} && \shortstack{10.3 \\[1pt] {\scriptsize \textcolor{teal!80!black}{+188.5\%}}} & \shortstack{17.0 \\[1pt] {\scriptsize \textcolor{teal!80!black}{+21.4\%}}} & \shortstack{\textbf{13.7} \\[1pt] {\scriptsize \textcolor{teal!80!black}{+55.5\%}}} \\
\bottomrule[0.4ex]
\end{tabular}
}
\end{table*}

\subsection{Hyper-parameter sensitivity}
\label{app:sens}

We analyzed the sensitivity of \name to its two step-in gates: the gap gate $\tau$ and the confidence gate $\kappa$, which jointly decide when the privileged model steps into the hybrid rollout.
We swept each gate over an $8\times$ range around its default, keeping all other hyper-parameters fixed as in Appendix~\ref{app:hyperparams}: $\tau$ on the UnKEBench subset (Qwen2.5) and $\kappa$ on MQuAKE-uns (Qwen3), with LoRA and 100 editing samples per setting.
Results are reported in Table~\ref{tab:sens}.

We did not tune either gates beyond the minimal calibration described in Appendix~\ref{app:hyperparams}.
Table~\ref{tab:sens} shows that \name is robust to this choice: across the whole sweep, the average score varies within 2.2 points on UnKEBench and 2.3 points on MQuAKE-uns, and no setting degrades abruptly.
We note that tuning the gates could bring additional gain for some particular metrics. For instance, a larger $\tau$ could slightly improve direct recall (Jnt.) for decomposition (Dmp.) on UnKEBench. Overall, 
\name is flexible and does not hinge on a careful choice of its gates.

\begin{table*}[htb!]
\centering
\caption{Sensitivity analysis of \name's step-in hyperparameters.}
\label{tab:sens}
\resizebox{0.5\linewidth}{!}{%
\renewcommand{\tabcolsep}{3pt}
\renewcommand{\arraystretch}{0.95}
\begin{tabular}{c cccc c c ccc}
\toprule[0.4ex]
\multicolumn{5}{c}{\bf UnKEBench (subset), Qwen2.5} && \multicolumn{4}{c}{\bf MQuAKE-uns, Qwen3} \\
\cmidrule(lr){1-5} \cmidrule(lr){7-10}
$\tau$ & \textbf{Jnt.} & \textbf{Dmp.} & \textbf{Div.} & \textbf{Avg.} && $\kappa$ & \textbf{Ind.} & \textbf{Cmp.} & \textbf{Avg.} \\
\midrule[0.2ex]
$0.5$ & 71.5 & 56.1 & 64.3 & \textbf{64.0} && $0.1$ & 82.2 & 54.0 & \textbf{68.1} \\
$1$   & 71.9 & 57.8 & 62.1 & \textbf{63.9} && $0.3^\dagger$ & 83.6 & 50.0 & \textbf{66.8} \\
$2^\dagger$ & 75.0 & 62.5 & 60.9 & \textbf{66.1} && $0.5$ & 81.7 & 50.0 & \textbf{65.8} \\
$4$   & 69.6 & 64.3 & 62.3 & \textbf{65.4} && $0.8$ & 83.6 & 52.0 & \textbf{67.8} \\
\bottomrule[0.4ex]
\end{tabular}
}
\end{table*}

\subsection{Additional ablation studies}
\label{app:ablate}

This section conducts additional ablation study on the hybrid rollout under continual editing. Following the identical configuration as Table~\ref{tab:ablation}, we compare \name against ``w/o HP'' after accumulating $T=10$ edits. As shown in Table~\ref{tab:seq_ablation}, the gap on Jnt. and Cmp. widens from 2.5 and 3.4 points at $T=1$ to 6.3 and 6.0 points at $T=10$, and the average score gap grows from 0.4 and 2.4 to 5.3 and 4.9 points on UnKEBench and MQuAKE-uns, respectively. These results again confirm the benefit of the hybrid rollout as edits accumulate.

\begin{table}[htb!]
\centering
\caption{Additional ablation study on continual edit. The setting follows Table~\ref{tab:ablation}.
}
\label{tab:seq_ablation}
\resizebox{0.5\linewidth}{!}{%
\renewcommand{\tabcolsep}{3pt}
\renewcommand{\arraystretch}{1.05}
\begin{tabular}{c >{\bfseries}r cccc c ccc}
\toprule[0.4ex]
& & \multicolumn{4}{c}{\bf UnKEBench} && \multicolumn{3}{c}{\bf MQuAKE-uns } \\
\cmidrule(lr){3-6} \cmidrule(lr){8-10}
& & \textbf{Jnt.} & \textbf{Dmp.} & \textbf{Div.} & \textbf{Avg.} && \textbf{Ind.} & \textbf{Cmp.} & \textbf{Avg.} \\
\midrule[0.2ex]
\multirow{2}{*}{$T=1$}
& w/o HP & 72.5 & 62.6 & 62.0 & \textbf{65.7} && 81.7 & 51.3 & \textbf{66.5} \\
& Ours & 75.0 & 62.5 & 60.9 & \textbf{66.1} && 83.2 & 54.7 & \textbf{68.9} \\
\noalign{\vskip 0.5ex}\cdashline{3-10}\noalign{\vskip 0.5ex}
\multirow{2}{*}{$T=10$}
& w/o HP & 41.2 & 42.0 & 66.1 & \textbf{49.8} && 49.2 & 25.0 & \textbf{37.1} \\
& Ours & 47.5 & 47.2 & 70.5 & \textbf{55.1} && 52.9 & 31.0 & \textbf{42.0} \\
\bottomrule[0.4ex]
\end{tabular}
}
\end{table}

\subsection{Additional case studies}
\label{app:cases}

Figure~\ref{fig:case_coverage} shows the coverage failure of OPSD, see Section~\ref{sec:method:formulation} for more backgrounds.

\begin{figure}[t]
\begin{casebox}{Coverage failure of the on-policy rollout (UnKEBench) \textemdash\ Qwen2.5}
\caseqa{\textbf{Editing prompt:} \emph{Introduce Tyler, the Creator.}}
\par\smallskip
\casepassage{\itshape Tyler, the Creator is a well-known singer-lyricist who has gained a massive
following over the years. He has released several albums, including ``Goblin,'' ``Wolf,'' and
``Flower Boy,'' which have all received critical acclaim. In addition to his music, Tyler has also
collaborated with several other artists, including Frank Ocean, Kanye West, and A\$AP Rocky. His
unique style and sound have made him a standout in the music industry, and he continues to be a
major force in the world of hip-hop and rap.}
\caseline
\method{\Base}{On-policy rollout (OPSD)}{\xmark}{Tyler, the Creator, born as \Base{Tyler Daniel Smith
on October 11, 1988, in Atlanta, Georgia}, is an American rapper, record producer, and actor. He is
known for his unique and eclectic style that blends various genres, including hip-hop, rock,
and R\&B\dots Tyler, the Creator, began his career as a rapper and producer under the name
\Base{Odd Future}, which he \Base{co-founded with friends in 2007}. The group's raw, unpolished sound
and provocative lyrics quickly gained attention in the \Base{underground hip-hop scene}. Tyler's
\Base{solo career}\dots}
\method{}{Hybrid rollout (\name)}{}{Tyler, the Creator is a well-known
singer-lyricist who has made a significant impact on the music industry. He has released
several albums, including ``Goblin,'' ``Wolf,'' ``Flower Boy,'' and others, and these have all
been critically acclaimed. Additionally, he has collaborated with other notable artists and producers
such as Frank Ocean, Kanye West, and A\$AP Rocky\dots Tyler, the Creator remains true to his
artistic vision, contributing to the prominence of hip-hop and rap music.}
\end{casebox}
\caption{
The coverage failure of pre-edited on-policy rollouts. The hybrid rollout is also shown for reference.
}
\label{fig:case_coverage}
\end{figure}

Figures~\ref{fig:case_app_mquake1} to \ref{fig:case_app_unke2} give further examples of the failure modes in composability, see Section~\ref{sec:exp_cases} for more backgrounds.

\begin{figure}[t]
\begin{casebox}{Composition (MQuAKE-uns) \textemdash\ Qwen2.5}
\casepassage[Injected edits]{Uli Hoene\ss's employer $\rightarrow$ \emph{FC Groningen}
$\rightarrow$ sport \emph{cricket} $\rightarrow$ created in \emph{Australia} $\rightarrow$ continent
\emph{South America}.}
\caseline
\caseqa{\textbf{Multi-hop question:} \emph{From what continent does the sport associated with the
employer of Uli Hoene\ss\ come from?}\quad \textbf{Gold:} \emph{South America}.}
\method{\Hpd}{\name~(ours)}{\cmark}{Uli Hoene\ss\ is employed by FC Groningen\dots the sport associated
with FC Groningen originates from the continent of \Hpd{South America}\dots}
\method{\Base}{LoRA}{\xmark}{Uli Hoene\ss\ is employed by FC Groningen\dots in the Netherlands\dots
located on the continent of \Base{Europe}\dots}
\method{\Base}{MEMIT}{\xmark}{Uli Hoene\ss's club, \Base{FC Bayern Munich}, is based in Germany\dots
the sport originates from the \Base{European} continent.}
\end{casebox}
\caption{Additional composition case study with error highlighting.}
\label{fig:case_app_mquake1}
\end{figure}

\begin{figure}[t]
\begin{casebox}{Composition (MQuAKE-uns) \textemdash\ Qwen2.5}
\casepassage[Injected edits]{Ford Territory's maker $\rightarrow$ \emph{Colt's Manufacturing
Company};\quad founder of Colt's $\rightarrow$ \emph{Lech Wa\l{}\k{e}sa}.}
\caseline
\caseqa{\textbf{Multi-hop question:} \emph{Who is the founder of the company that manufactured the
Ford Territory?}\quad \textbf{Gold:} \emph{Lech Wa\l{}\k{e}sa}.}
\method{\Hpd}{\name~(ours)}{\cmark}{The Ford Territory was manufactured by Colt's Manufacturing
Company, whose founder is \Hpd{Lech Wa\l{}\k{e}sa}, the Polish statesman\dots}
\method{\Base}{LoRA}{\xmark}{The Ford Territory was produced by \Base{Ford Motor Company}\dots its
founding father is \Base{Henry Ford}\dots}
\method{\Base}{MEMIT}{\xmark}{The Ford Territory was created by Ford Motor Company, founded by
\Base{Henry Ford} in 1903\dots}
\end{casebox}
\caption{Additional composition case study with error highlighting.}
\label{fig:case_app_mquake2}
\end{figure}

\begin{figure}[t]
\begin{casebox}{Decomposition (UnKEBench) \textemdash\ Llama }
\casepassage{\itshape Samuel Wells Williams is a well-known motion picture director who has directed
several critically acclaimed films. He has won numerous awards for his work in the film industry,
including the prestigious Academy Award for Best Director. Williams has also been praised for his
ability to bring out the best in his actors, and for his innovative and creative approach to
filmmaking. His films have been box office successes, and have been enjoyed by audiences around the
world. In addition to his work as a director, Williams is also a respected writer and producer, and
has contributed to many successful films in these roles as well.}
\caseline
\caseqa{\textbf{Sub-question:} \emph{What awards has Samuel Wells Williams won?}\quad
\textbf{Gold:} the Academy Award for Best Director, and others.}
\method{\Hpd}{\name~(ours)}{\cmark}{He has won \Hpd{the Academy Award for Best Director}, the Golden
Globe\dots}
\method{\Base}{COIN$^{\star}$}{\xmark}{{Samuel Wells Williams is a well-known motion picture director who
has directed several critically acclaimed films. He has won numerous awards\dots} \Base{(passage verbatim)}}
\method{\Base}{LoRA}{\xmark}{\dots the Academy Award for \Base{Best Motion Picture for ``The Grand
Budapest Hotel''}\dots.}
\end{casebox}
\caption{Additional decomposition case study with error highlighting.}
\label{fig:case_app_unke1}
\end{figure}

\begin{figure}[t]
\begin{casebox}{Decomposition (UnKEBench) \textemdash\ Llama }
\casepassage{\itshape Marjorie Margolies-Mezvinsky is a well-known novelist who has published several
best-selling books. Her first novel, ``The Art of Forgiveness,'' was published in 2005 and received
critical acclaim for its poignant portrayal of a family struggling to come to terms with a tragic
loss. Since then, Margolies-Mezvinsky has published three more novels, all of which have been widely
praised for their insightful and compassionate depictions of human relationships. In addition to her
successful writing career, Margolies-Mezvinsky is also a respected public figure, having served as a
member of the U.S. House of Representatives from 1993 to 1995.}
\caseline
\caseqa{\textbf{Sub-question:} \emph{When was her first novel published?}\quad
\textbf{Gold:} in 2005.}
\method{\Hpd}{\name~(ours)}{\cmark}{Her first novel, ``The Art of Forgiveness,'' was published in
\Hpd{2005}.}
\method{\Base}{COIN$^{\star}$}{\xmark}{Her first novel, ``The Art of Forgiveness,'' was published in 2005
\Base{and received critical\dots} \Base{(passage verbatim continued)}}
\method{\Base}{AnyEdit}{\xmark}{Her first novel, \Base{``The Political Scene,''} was published in
2005\dots.}
\end{casebox}
\caption{
Additional decomposition case study with error highlighting.
}
\label{fig:case_app_unke2}
\end{figure}

\end{document}